\documentclass{article}

\PassOptionsToPackage{numbers, compress}{natbib}

\usepackage[main, final]{neurips_2026}

\usepackage[utf8]{inputenc} \usepackage[T1]{fontenc}    \usepackage{hyperref}       \usepackage{url}            \usepackage{booktabs}       \usepackage{amsfonts}       \usepackage{nicefrac}       \usepackage{microtype}      \usepackage{xcolor}         
\usepackage{amsmath}
\usepackage{amssymb}
\usepackage{mathtools}
\usepackage{amsthm}
\usepackage{thmtools,thm-restate}
\usepackage{multirow}
\usepackage{multicol}
\usepackage{wrapfig}
\usepackage{algorithm}
\usepackage{algpseudocode}
\usepackage{authblk}
\usepackage{graphicx}
\usepackage{subcaption}
\usepackage{dsfont}
\usepackage{bm}
\usepackage[capitalize,noabbrev]{cleveref}

\makeatletter

\newcommand{\model}{HRIL\futurelet\next@token\check@next@token}
\newcommand{\check@next@token}{\ifcat\noexpand\next@token a\ \else
    \ifx\next@token(\ \else
        \fi
  \fi
}

\newtheorem{proposition}{Proposition}
\newtheorem{lemma}{Lemma}

\newtheorem{definition}{Definition}

\newtheorem{assumption}{Assumption}

\newcommand{\vectorname}[1]{{\mathrm{\mathbf{#1}}}}
\newcommand{\ie}{\textit{i.e.}}
\newcommand{\eg}{\textit{e.g.}}

\title{\model: Learning Multimodal Synergy \\ via Higher-Order Tensor Modeling}

\author{
\textbf{Qun Dai$^{1}$, Liangjian Wen$^{1,2\ast}$, Jiang Duan$^{1,7}$, Yong Dai$^{3}$, Dongkai Wang$^{1}$, Maolin Wang$^{4}$, Mingjie Wang$^{5}$, Jianzhuang Liu$^{6}$, HE YAN$^{7}$, Zhao Kang$^{8}$} \\

$^1$\textmd{School of Computing and Artificial Intelligence,  Southwestern University of Finance and Economics}\\
$^2$Artificial Intelligence and Digital Finance Key Laboratory of Sichuan Province\\
$^3$X-Humanoid \\
$^4$Hong Kong Institute of AI for Science, City University of Hong Kong\\
$^5$Zhejiang Sci-Tech University\\
$^6$Shenzhen Institutes of Advanced Technology, Chinese Academy of Sciences \\
$^7$Chengdu Everimaging Science and Technology Co., Ltd\\
$^8$University of Electronic Science and Technology of China\\
\texttt{\{daiqun1124,wlj6816\}@gmail.com} \\
}

\begin{document}

\maketitle
\renewcommand{\thefootnote}{\fnsymbol{footnote}}
    \footnotetext[1]{Corresponding author.}

\begin{abstract}
Self-supervised multimodal representation learning has achieved remarkable success across diverse domains, 
yet capturing \emph{synergistic} information remains challenging due to the complexity of cross-modal interactions.
Unlike the shared information across individual modalities, synergy arises when task-relevant signals emerge only from the joint configuration of multiple modalities and cannot be recovered from any modality in isolation. 
This work focuses on how to preserve the information capacity for such synergistic signals in multimodal representations.
The key observation is that synergistic information is reflected in higher-order statistical dependence among modalities, which provides a principled target for explicitly modeling joint interactions. 
Motivated by this insight, we propose \textbf{H}igher-order \textbf{R}epresentation and \textbf{I}nformation \textbf{L}earning (\model), 
which constructs an empirical cross-moment tensor over modality embeddings to represent multi-way interactions.
\model employs Tucker decomposition to obtain a core tensor,
complemented by a synergy-aware regularizer that prevents energy concentration and preserves higher-order coupling capacity for synergistic information capture.
Experiments on the controlled synergy task and real-world benchmarks demonstrate consistent improvements over existing multimodal contrastive methods, with notable gains on tasks dominated by synergistic interactions.
Code is released at \url{https://github.com/brightest66/HRIL}.
\end{abstract}

\section{Introduction}
Self-supervised multimodal representation learning constructs joint representations from heterogeneous data sources whose underlying statistical structures differ fundamentally.
Since distinct modalities arise from disparate data-generating processes and reside in qualitatively different statistical spaces,
the central challenge is to leverage complementary and heterogeneous features across modalities for representations that are maximally informative for downstream tasks.
Foundational pairwise contrastive methods such as CLIP~\cite{radford2021learning} and ALIGN~\cite{jia2021scaling} have achieved remarkable transferability across vision, language, and audio domains, 
yet they operate under the \textit{multiview redundancy assumption}~\cite{tsai2020self, DBLP:conf/colt/SridharanK08,10477543}, which posits that task-relevant information is largely shared across modalities.

When we examine multimodal interactions across two or more modalities through the lens of Partial Information Decomposition (PID)~\cite{williams2010nonnegative, bertschinger2014quantifying}
--- which decomposes task-relevant information into \emph{redundancy} (shared across modalities), \emph{uniqueness} (exclusive to individual modalities), and \emph{synergy} (accessible only via joint observation of multiple modalities)
--- the limitations of redundancy-centric pairwise contrastive learning become apparent.
For instance, in the MM-IMDb dataset~\cite{arevalo2017gated}, accurate genre prediction necessitates the synergistic integration of textual plot descriptions and visual poster elements, as the genre semantics emerge from their cross-modal combination rather than being fully encapsulated in either modality alone.
Consequently, synergy is of fundamental importance in multimodal representation learning, 
as it embodies the irreducible cross-modal dependency that justifies multimodal representation learning and cannot be inferred from any single modality~\cite{dufumier2024align,wen2025infmasking}.

A natural question is whether stronger pairwise contrastive learning can capture synergy.
\cite{dufumier2024align} demonstrates that cross-modal contrastive learning under the \emph{multiview redundancy assumption} primarily captures shared information in bimodal settings.
In this work, we further prove that a representation which is \emph{sufficient} for all pairwise contrastive learning yet \emph{minimal} in information content provably discards synergistic information accessible only through the joint observation of all modalities in scenarios with more than two modalities, because such information is not required for pairwise sufficiency, building on~\cite{wang2022rethinking}.
Several recent methods have moved beyond pairwise formulations.
CoMM~\cite{dufumier2024align} allows synergy to emerge implicitly through augmented fusion, while Symile~\cite{saporta2024contrasting} and ConFu~\cite{koutoupis2025more} target higher-order dependence via total correlation or subset fusion.
However, these approaches either lack explicit guarantees for synergy preservation or model higher-order dependence without formally connecting it to task-relevant synergistic information.

To bridge this gap, we propose Higher-order Representation and Information Learning (\model),
a synergy-aware fusion framework for multimodal contrastive representation learning.
Guided by our analysis linking synergistic information to higher-order statistical dependence, 
\model models multi-way interactions explicitly by constructing an empirical cross-moment tensor over modality embeddings.
\model then applies Tucker decomposition to obtain a compact interaction core and introduces a synergy-aware regularizer
that prevents degenerate interaction structures and preserves diverse higher-order interaction patterns during training.

\begin{wrapfigure}{r}{0.4\textwidth}
  \vspace{-1.2em}
  \centering
  \includegraphics[scale=0.59]{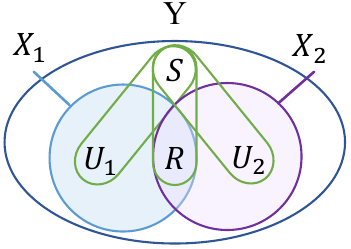}
  \caption{
  PID decomposes joint mutual information \(I(X_1,X_2;Y)\) into shared \(R\), unique \(U_1/U_2\), and synergistic \(S\).}
  \label{fig:PID}
  \vspace{-0.8em}
\end{wrapfigure}

Our main contributions are threefold.
\textbf{(1)} 
We formally establish the insufficiency of pairwise contrastive learning for synergy recovery among more than two modalities and identify the pure synergistic information as a conditional total-correlation difference, providing a precise information-theoretic criterion for when higher-order modeling is necessary.
\textbf{(2)} 
We propose \model, 
which models higher-order interactions 
via empirical cross-moment tensorization and Tucker decomposition, coupled with a synergy-aware regularizer
that penalizes excessive energy concentration in the Tucker core to preserve capacity for multi-way interactions.
\textbf{(3)} 
We conduct extensive experiments on real-world benchmarks spanning healthcare and affective computing
 --- which contain diverse modalities (\eg, image, text, audio, \textit{etc.}) --- 
complemented by controlled synthetic evaluations.
Our results demonstrate consistent improvements over state-of-the-art methods, with pronounced gains on synergy-heavy tasks.

\section{Theoretical Analysis}
\label{sec:theoretical_analysis}

In this section, 
we first formalize a latent-factor framework to 
demonstrate the insufficiency of pairwise contrastive learning for capturing synergy in scenarios involving more than two modalities,
and then quantify the synergy gap via conditional total correlation, deriving an information-theoretic criterion for the necessity of higher-order modeling.
In our theoretical analysis, we focus on the case of more than two modalities.
Our analysis of pairwise contrastive learning focuses on its minimal sufficient representations.
All proofs are provided in Appendix \ref{sec:all_proof}.

\subsection{A Latent-Factor Framework for Multimodal Representations}

\textbf{Task setup.} Given \(X=\{X_1, X_2, \dots, X_M\}\) with \(M\) modalities and a downstream task label \(Y\), our objective is to learn a multimodal representation \(\vectorname{Z}=f(X)\) that is informative for \(Y\). 
Throughout the paper, we use \(X_{1:M}:=(X_1,\dots,X_M)\) and \(X_A:=\{X_i: i\in A\}\) for any index set \(A\subseteq\{1,\dots,M\}\).

Partial Information Decomposition (PID)~\cite{williams2010nonnegative, bertschinger2014quantifying} conceptually decomposes task-relevant information into redundancy, uniqueness, and synergy (see Fig.~\ref{fig:PID} for a visual depiction). 
Motivated by this view, we adopt the latent-factor framework below.
\begin{assumption}[Tripartite Latent Factor Decomposition]
  We assume the existence of latent factors
  \(
  \{{\vectorname{q}_r, \vectorname{q}_1,\dots,\vectorname{q}_M, \vectorname{q}_{\text{syn}}}\}
  \),
  and all task-relevant information is mediated by these latent factors, \ie,
  \[
    Y \perp X_{1:M} \mid (\vectorname{q}_r,\vectorname{q}_1,\dots,\vectorname{q}_M,\vectorname{q}_{\text{syn}}),
  \]
  where \(\vectorname{q}_r\) is shared factors across modalities, 
  \(\vectorname{q}_m\) denotes the \(m\)-th modality-specific factor, 
  \ie, \(X_{-m}\perp\vectorname{q}_m\mid(\vectorname{q}_r,\vectorname{q}_{-m},
  \vectorname{q}_{\mathrm{syn}})\), where \(X_{-m}=X_{\{1,\ldots,M\}\setminus\{m\}}\). And
  \(\vectorname{q}_{\text{syn}}\) denotes synergistic factors that capture task-relevant information emerging only from joint observation.
  These latent factors are mutually independent.
  Throughout, \(\perp\) denotes conditional independence.
  \label{Ass:syn_multi_factor}
\end{assumption}

Based on the above, we further state that \((\vectorname{q}_r,\vectorname{q}_{1:M},\vectorname{q}_{\mathrm{syn}})\) represents a decomposition of the task-relevant information contained in the complete multimodal observation \(X_{1:M}\). Accordingly, once \(X_{1:M}\) is given, \((\vectorname{q}_r,\vectorname{q}_{1:M},\vectorname{q}_{\mathrm{syn}})\) provides no further information about \(Y\), which is expressed as
\[
  Y\perp (\vectorname{q}_r,\vectorname{q}_{1:M},\vectorname{q}_{\mathrm{syn}})\mid X_{1:M}.
\]
\vspace{-\baselineskip}

Among these latent factors, \(\vectorname{q}_{\text{syn}}\) is substantially harder to capture because it corresponds to task-relevant dependencies that emerge only through joint multimodal observation and therefore requires a more precise characterization~\cite{dufumier2024align,wen2025infmasking}.
We formalize this accessibility constraint as follows.
\begin{definition}[Pure Synergistic Factor]
We call \(\vectorname{q}_{\text{syn}}\) a pure synergistic factor relative to \(\vectorname{q}_r\) if
\[
I(\vectorname{q}_{\text{syn}}; X_{1:M} \mid \vectorname{q}_r) > 0, \quad \text{and} \quad I(\vectorname{q}_{\text{syn}}; X_A \mid \vectorname{q}_r) = 0, \ \forall\, A \subsetneq \{1,\dots,M\}.
\]
\label{def:pure_synergy}
\end{definition}
\vspace{-\baselineskip}
Under Assumption~\ref{Ass:syn_multi_factor}, the observations are generated as \(X_{1:M}\sim p(X_{1:M}\mid \vectorname{q}_r,\vectorname{q}_{1:M},\vectorname{q}_{\text{syn}})\), where each unique factor \(\vectorname{q}_m\) influences only \(X_m\). 
Motivated by this latent-factor framework, we interpret the learned multimodal representation \(\vectorname{Z}\) as being organized into three functional subspaces:
\begin{equation}
\vectorname{Z}\approx V^r f_r(\mathbf{q}_r) + V^u f_u(\mathbf{q}_1, \mathbf{q}_2, \dots, \mathbf{q}_M) + V^{\text{syn}} f_{\text{syn}}(\mathbf{q}_{\text{syn}}) + \epsilon,
\label{eq:factor_decomposition}
\end{equation}
where \(V^r\) spans a shared subspace, \(V^u\) captures modality-specific structures, \(V^{\text{syn}}\) denotes a synergistic subspace, and \(\epsilon\) is a noise term.

\subsection{Minimal Sufficient Representation Analysis}

Let \(\vectorname{Z}_{\text{pair}}^{\min}\) denote a representation that is \emph{sufficient} for all pairwise inter-modal alignments and \emph{minimal} in the sense of minimizing \(I(\vectorname{Z};X_{1:M})\) over the class of such representations.
Inspired by~\cite{wang2022rethinking}, we adopt the following shared information characterization for our analysis:
\begin{equation}
  I(\vectorname{Z}_{\text{pair}}^{\min};X_{1:M}\mid\vectorname{q}_r)=0,
  \label{eq:pairwise_shared_condition}
\end{equation}
meaning that \(\vectorname{Z}_{\text{pair}}^{\min}\) retains only information captured by the shared factor \(\vectorname{q}_r\).

Prior work~\cite{tian2020makes,wang2022rethinking} has shown that minimal sufficient representations in pairwise contrastive learning can be inadequate for downstream tasks.
Here, we sharpen this limitation within this latent-factor framework (Assumption~\ref{Ass:syn_multi_factor}), isolating the synergistic component 
excluded by pairwise minimal sufficiency.

\begin{restatable}[Insufficiency of Pairwise Minimal Sufficient Representation]{theorem}{pairwiseInsufficientRestated}
\label{thm:pairwise_insufficient}
Under Assumption~\ref{Ass:syn_multi_factor}, suppose that \(\vectorname{q}_{\text{syn}}\) is a pure synergistic factor in the sense of Definition~\ref{def:pure_synergy} and that \(I(Y;\vectorname{q}_{\text{syn}}\mid \vectorname{q}_r,\vectorname{q}_{1:M})>0\). Then
\[
I(Y;\vectorname{Z}_{\text{pair}}^{\min}) \le I(Y;\vectorname{q}_r).
\]
Moreover, the total information gap is lower bounded as
\[
I(Y;X_{1:M}) - I(Y;\vectorname{Z}_{\text{pair}}^{\min})
\;\ge\;
\underbrace{I(Y;\vectorname{q}_{1:M}\mid\vectorname{q}_r)}_{\text{unique gap}}
\;+\;
\underbrace{I(Y;\vectorname{q}_{\text{syn}}\mid \vectorname{q}_r,\vectorname{q}_{1:M})}_{\text{synergistic gap}}.
\]
\end{restatable}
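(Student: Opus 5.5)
The argument splits into two parts: an upper bound on $I(Y;\vectorname{Z}_{\text{pair}}^{\min})$, and then a chain-rule decomposition of $I(Y;X_{1:M})$ in terms of the latent factors.

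\textbf{First claim.} The key input is the characterization \eqref{eq:pairwise_shared_condition}, $I(\vectorname{Z}_{\text{pair}}^{\min};X_{1:M}\mid\vectorname{q}_r)=0$. I would treat $\vectorname{Z}_{\text{pair}}^{\min}$ as a (possibly randomized) function of $X_{1:M}$ with encoder noise independent of everything else. Then $Y$ and $\vectorname{Z}$ are conditionally independent given $X_{1:M}$ and $\vectorname{q}_r$, so
\[
I(\vectorname{Z};Y\mid\vectorname{q}_r)\le I(\vectorname{Z};X_{1:M},Y\mid\vectorname{q}_r)=I(\vectorname{Z};X_{1:M}\mid\vectorname{q}_r)=0 .
\]
By the chain rule,
\[
I(Y;\vectorname{Z})\le I(Y;\vectorname{Z},\vectorname{q}_r)=I(Y;\vectorname{q}_r)+I(Y;\vectorname{Z}\mid\vectorname{q}_r)=I(Y;\vectorname{q}_r).
\]

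\textbf{Second claim.} Write $\mathbf{Q}=(\vectorname{q}_r,\vectorname{q}_{1:M},\vectorname{q}_{\text{syn}})$. The assumption gives $Y\perp X_{1:M}\mid \mathbf{Q}$, and the stated converse gives $Y\perp \mathbf{Q}\mid X_{1:M}$. Expanding $I(Y;X_{1:M},\mathbf{Q})$ in both orders then yields $I(Y;X_{1:M})=I(Y;\mathbf{Q})$. Applying the chain rule to $I(Y;\mathbf{Q})$ gives
\[
I(Y;\mathbf{Q})=I(Y;\vectorname{q}_r)+I(Y;\vectorname{q}_{1:M}\mid\vectorname{q}_r)+I(Y;\vectorname{q}_{\text{syn}}\mid\vectorname{q}_r,\vectorname{q}_{1:M}).
\]
Subtracting the first claim gives the gap bound, in fact with the two gap terms appearing exactly, so the inequality comes out slightly stronger than stated.

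\textbf{Role of the hypotheses and main obstacle.} The hypotheses on pure synergy and on $I(Y;\vectorname{q}_{\text{syn}}\mid\cdot)>0$ serve only to make the synergistic gap strictly positive. They also explain why \eqref{eq:pairwise_shared_condition} is legitimate: no pair $X_A$ with $|A|=2<M$ carries information about $\vectorname{q}_{\text{syn}}$ given $\vectorname{q}_r$, so pairwise sufficiency never requires it. I would remark on this but not rely on it. The main delicate step is the Markov structure $Y - (X_{1:M},\vectorname{q}_r) - \vectorname{Z}$. It requires that $\vectorname{Z}$ depend on the data only through $X_{1:M}$ and independent noise; I would state this modeling convention explicitly. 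If it could not be assumed, I would instead invoke $I(\vectorname{Z};X_{1:M}\mid\vectorname{q}_r)=0$ as encoding that $\vectorname{Z}$ is a function of $\vectorname{q}_r$ alone, which again gives $I(Y;\vectorname{Z})\le I(Y;\vectorname{q}_r)$ by data processing.
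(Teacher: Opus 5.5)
Your proof is correct. Your second claim is exactly the paper's Step 2: expand $I(Y;X_{1:M},\mathbf{Q})$ in both orders, then apply the chain rule over $\vectorname{q}_r,\vectorname{q}_{1:M},\vectorname{q}_{\text{syn}}$.

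The first claim is where you differ. The paper routes through the latent factors in three moves:
\begin{itemize}
\item conditional data processing gives $I(\vectorname{Z}_{\text{pair}}^{\min};\vectorname{q}_{1:M},\vectorname{q}_{\text{syn}}\mid\vectorname{q}_r)\le I(\vectorname{Z}_{\text{pair}}^{\min};X_{1:M}\mid\vectorname{q}_r)=0$;
\item Assumption~\ref{Ass:syn_multi_factor} gives $Y\perp\vectorname{Z}_{\text{pair}}^{\min}\mid(\vectorname{q}_r,\vectorname{q}_{1:M},\vectorname{q}_{\text{syn}})$;
\item the chain rule on $I(\vectorname{Z}_{\text{pair}}^{\min};Y,\vectorname{q}_{1:M},\vectorname{q}_{\text{syn}}\mid\vectorname{q}_r)$ then yields $I(Y;\vectorname{Z}_{\text{pair}}^{\min}\mid\vectorname{q}_r)=0$.
\end{itemize}
You condition on $(X_{1:M},\vectorname{q}_r)$ instead. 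That reaches the same conclusion in one line from Eq.~\eqref{eq:pairwise_shared_condition} and the fact that the encoder sees only $X_{1:M}$.

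Your route is shorter. It also shows that $I(Y;\vectorname{Z}_{\text{pair}}^{\min})\le I(Y;\vectorname{q}_r)$ does not use Assumption~\ref{Ass:syn_multi_factor} at all. The paper's detour instead produces the by-product $\vectorname{Z}_{\text{pair}}^{\min}\perp(\vectorname{q}_{1:M},\vectorname{q}_{\text{syn}})\mid\vectorname{q}_r$. That statement is what ties the bound to the paper's narrative that pairwise-minimal representations discard the unique and synergistic factors. Both routes need the same structural input: $\vectorname{Z}$ must be a (possibly noisy) function of $X_{1:M}$, which the task setup $\vectorname{Z}=f(X)$ supplies. Your exact decomposition of the gap, $[I(Y;\vectorname{q}_r)-I(Y;\vectorname{Z})]$ plus the unique and synergistic terms, is also fine.

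You should drop the fallback you sketch at the end, though. Without that generative structure, $I(\vectorname{Z};X_{1:M}\mid\vectorname{q}_r)=0$ does not make $\vectorname{Z}$ a function of $\vectorname{q}_r$. It does not even make $\vectorname{Z}$ conditionally independent of $Y$ given $\vectorname{q}_r$. For example, a $\vectorname{Z}$ that copies label noise in $Y$, independent of $X_{1:M}$, satisfies $I(\vectorname{Z};X_{1:M}\mid\vectorname{q}_r)=0$ yet can have $I(Y;\vectorname{Z})>I(Y;\vectorname{q}_r)$. The step from $I(\vectorname{Z};X_{1:M}\mid\vectorname{q}_r)=0$ to ``$\vectorname{Z}$ is a function of $\vectorname{q}_r$'' is valid when $\vectorname{Z}$ is a deterministic function of $X_{1:M}$. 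That is exactly the structure the fallback was meant to avoid, so it offers no independent justification.
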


Theorem~\ref{thm:pairwise_insufficient} reveals that pairwise contrastive learning even after recovers all shared and unique factors \((\vectorname{q}_r,\vectorname{q}_{1:M})\), an irreducible synergistic gap \(I(Y;\vectorname{q}_{\text{syn}}\mid \vectorname{q}_r,\vectorname{q}_{1:M})>0\) persists and can only be closed by information about \(\vectorname{q}_{\text{syn}}\).
Inspired by~\cite{koutoupis2025more}, we turn to total correlation~\cite{watanabe1960information} as a measure of higher-order dependence to 
characterize pure synergistic information.
Specifically, total correlation characterizes the overall statistical coupling among multiple random variables by the Kullback--Leibler divergence between their joint distribution and the product of their marginals.
The conditional difference below makes its connection to pure synergy explicit.

\begin{restatable}[Conditional Total Correlation Decomposition]{proposition}{conditionalTCRestated}
\label{prop:ctc}
For \(M\) modalities, the conditional total correlation admits the chain-rule decomposition
\[
\mathrm{TC}(X_{1:M}\mid \vectorname{q}_r)
= \sum_{k=2}^{M} I(X_k;\,X_1,\dots,X_{k-1}\mid \vectorname{q}_r).
\]
In particular, for \(M=3\): \(\mathrm{TC}(X_1,X_2,X_3\mid \vectorname{q}_r)= I(X_1;X_2\mid \vectorname{q}_r) + I(X_3;X_1,X_2\mid \vectorname{q}_r)\), and symmetrically for all permutations.
\end{restatable}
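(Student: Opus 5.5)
We start from the definition of conditional total correlation as a Kullback--Leibler divergence, averaged over the conditioning variable. Explicitly,
\[
\mathrm{TC}(X_{1:M}\mid \vectorname{q}_r)=\mathbb{E}_{\vectorname{q}_r}\Bigl[D_{\mathrm{KL}}\bigl(p(X_{1:M}\mid \vectorname{q}_r)\,\big\|\,\textstyle\prod_{m=1}^M p(X_m\mid \vectorname{q}_r)\bigr)\Bigr].
\]
This is equivalent to the entropy form
\[
\mathrm{TC}(X_{1:M}\mid \vectorname{q}_r)=\sum_{m=1}^M H(X_m\mid \vectorname{q}_r)-H(X_{1:M}\mid \vectorname{q}_r).
\]
The equivalence follows by expanding the logarithm of the density ratio and taking expectations. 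For continuous modalities the same manipulation goes through with differential entropies, or directly with the log-ratio, provided the relevant quantities are finite. We assume this throughout, as is implicit in the statement.

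Next we apply the chain rule of conditional entropy to the joint term:
\[
H(X_{1:M}\mid \vectorname{q}_r)=\sum_{k=1}^M H(X_k\mid X_{1:k-1},\vectorname{q}_r),
\]
with $X_{1:0}$ empty. Substituting this into the entropy form and pairing the $k$-th marginal entropy with the $k$-th chain-rule term, the $k=1$ terms cancel. Each remaining pair for $k\ge 2$ satisfies
\[
H(X_k\mid \vectorname{q}_r)-H(X_k\mid X_{1:k-1},\vectorname{q}_r)=I(X_k;X_1,\dots,X_{k-1}\mid \vectorname{q}_r).
\]
Summing over $k=2,\dots,M$ gives the claimed decomposition.

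To avoid differences of possibly infinite differential entropies, an alternative is induction on $M$ working directly with log-density ratios. The key identity is
\[
\log\frac{p(x_{1:M}\mid q_r)}{\prod_{m} p(x_m\mid q_r)}=\log\frac{p(x_{1:M-1}\mid q_r)}{\prod_{m<M} p(x_m\mid q_r)}+\log\frac{p(x_{1:M}\mid q_r)}{p(x_{1:M-1}\mid q_r)\,p(x_M\mid q_r)}.
\]
Taking expectations gives
\[
\mathrm{TC}(X_{1:M}\mid \vectorname{q}_r)=\mathrm{TC}(X_{1:M-1}\mid \vectorname{q}_r)+I(X_M;X_{1:M-1}\mid \vectorname{q}_r).
\]
The base case $M=2$ is the identity $\mathrm{TC}(X_1,X_2\mid \vectorname{q}_r)=I(X_1;X_2\mid \vectorname{q}_r)$. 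We would present this inductive version as the main argument, since it needs only linearity of expectation.

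The $M=3$ case is the decomposition with $M=3$ written out. The phrase ``symmetrically for all permutations'' follows because $\mathrm{TC}$ is invariant under relabeling of the modalities. The product of marginals does not depend on their order, so the same chain rule applies to any ordering $X_{\sigma(1)},\dots,X_{\sigma(M)}$. The only genuine obstacle is therefore measure-theoretic well-definedness. We handle it by assuming the conditional densities exist and the mutual informations are finite, which makes every expectation above finite and justifies splitting the sum.
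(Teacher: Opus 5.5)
Your proposal is correct. Your first argument (write $\mathrm{TC}(X_{1:M}\mid\mathbf{q}_r)=\sum_m H(X_m\mid\mathbf{q}_r)-H(X_{1:M}\mid\mathbf{q}_r)$, expand the joint entropy by the chain rule, cancel the $k=1$ term, and pair the rest into conditional mutual informations) is exactly the paper's proof. The paper writes it out only for $M=3$ and calls the general case straightforward, whereas you carry it out for general $M$ directly.

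The inductive version you prefer as the main argument is the same chain rule applied to log-density ratios rather than to entropies, via $\mathrm{TC}(X_{1:M}\mid\mathbf{q}_r)=\mathrm{TC}(X_{1:M-1}\mid\mathbf{q}_r)+I(X_M;X_{1:M-1}\mid\mathbf{q}_r)$. It is a mild but genuine refinement. It never forms differences of (possibly infinite or undefined) differential entropies, so it covers continuous modalities under weaker conditions than the paper's entropy bookkeeping. The paper's route is shorter and suffices for discrete variables or finite differential entropies.

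Your finiteness assumption could in fact be dropped. Each log-ratio is the log of a density of a probability law with respect to another probability measure, so its negative part is integrable (with expectation at most $1$). The expectation therefore splits linearly in $[0,\infty]$ even when some term is infinite.

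Your handling of ``all permutations'' (invariance of the product of marginals under relabeling) matches the paper's.
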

Building on this decomposition, we now establish the precise quantitative link between synergistic information and higher-order statistical dependence.
\begin{restatable}[Synergy Total Correlation Identity]{theorem}{synergyTCRestated}
\label{thm:higher_order_needed}
Under Definition~\ref{def:pure_synergy}, the synergistic information satisfies the identity
\[
I(\vectorname{q}_{\text{syn}};X_{1:M}\mid \vectorname{q}_r)
= \mathrm{TC}(X_{1:M}\mid \vectorname{q}_r,\vectorname{q}_{\text{syn}})
  - \mathrm{TC}(X_{1:M}\mid \vectorname{q}_r).
\]
\end{restatable}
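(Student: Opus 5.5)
The plan is to expand both conditional total correlations into conditional entropies, subtract, and use the pure-synergy condition of Definition~\ref{def:pure_synergy} only on singleton subsets. Recall that for any conditioning variable \(C\),
\[
\mathrm{TC}(X_{1:M}\mid C)=\sum_{i=1}^{M} H(X_i\mid C)-H(X_{1:M}\mid C),
\]
which follows from the KL definition \(\mathrm{TC}(X_{1:M}\mid C)=\mathbb{E}_C\, D_{\mathrm{KL}}\bigl(p(x_{1:M}\mid C)\,\|\,\prod_i p(x_i\mid C)\bigr)\). It is also consistent with the chain-rule form in Proposition~\ref{prop:ctc}.

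I will apply this with \(C=(\vectorname{q}_r,\vectorname{q}_{\text{syn}})\) and with \(C=\vectorname{q}_r\), and then subtract. Grouping terms gives
\[
\mathrm{TC}(X_{1:M}\mid \vectorname{q}_r,\vectorname{q}_{\text{syn}})-\mathrm{TC}(X_{1:M}\mid \vectorname{q}_r)
=\bigl[H(X_{1:M}\mid \vectorname{q}_r)-H(X_{1:M}\mid \vectorname{q}_r,\vectorname{q}_{\text{syn}})\bigr]
-\sum_{i=1}^M\bigl[H(X_i\mid \vectorname{q}_r)-H(X_i\mid \vectorname{q}_r,\vectorname{q}_{\text{syn}})\bigr].
\]
The first bracket equals \(I(\vectorname{q}_{\text{syn}};X_{1:M}\mid\vectorname{q}_r)\), and the \(i\)-th summand equals \(I(\vectorname{q}_{\text{syn}};X_i\mid\vectorname{q}_r)\). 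This yields the general identity
\[
\mathrm{TC}(X_{1:M}\mid \vectorname{q}_r,\vectorname{q}_{\text{syn}})-\mathrm{TC}(X_{1:M}\mid \vectorname{q}_r)
= I(\vectorname{q}_{\text{syn}};X_{1:M}\mid\vectorname{q}_r)-\sum_{i=1}^M I(\vectorname{q}_{\text{syn}};X_i\mid\vectorname{q}_r).
\]
Since \(M\ge 2\), each singleton \(\{i\}\) is a proper subset of \(\{1,\dots,M\}\). Definition~\ref{def:pure_synergy} therefore gives \(I(\vectorname{q}_{\text{syn}};X_i\mid\vectorname{q}_r)=0\) for every \(i\), so the sum vanishes and the claimed identity follows. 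The positivity condition in the definition is not needed for the equality itself; it only shows that both sides are strictly positive.

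The main obstacle is technical rather than conceptual. The entropy expansion requires finite (possibly differential) entropies, and for general continuous embeddings the individual entropy terms may be infinite. To avoid this, I would redo the subtraction entirely at the level of mutual informations and KL divergences, which are always well defined in \([0,\infty]\).

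Concretely, I would write the joint-to-product log-likelihood ratio conditional on \((\vectorname{q}_r,\vectorname{q}_{\text{syn}})\). I would then factor it as the ratio conditional on \(\vectorname{q}_r\), times \(p(x_{1:M}\mid \vectorname{q}_r,\vectorname{q}_{\text{syn}})/p(x_{1:M}\mid \vectorname{q}_r)\), divided by \(\prod_i p(x_i\mid \vectorname{q}_r,\vectorname{q}_{\text{syn}})/p(x_i\mid \vectorname{q}_r)\). Taking expectations reproduces the same identity, provided \(\mathrm{TC}(X_{1:M}\mid\vectorname{q}_r)\) is finite so that no \(\infty-\infty\) arises. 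I would state that finiteness as a standing regularity assumption.
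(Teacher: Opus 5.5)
Your proposal is correct and follows essentially the same route as the paper: you expand both conditional total correlations into conditional entropies, identify the joint-entropy difference as \(I(\vectorname{q}_{\text{syn}};X_{1:M}\mid\vectorname{q}_r)\), and use the singleton case of Definition~\ref{def:pure_synergy} to cancel the marginal-entropy terms. Your intermediate general identity (with the subtracted sum \(\sum_i I(\vectorname{q}_{\text{syn}};X_i\mid\vectorname{q}_r)\)) and your finiteness caveat for continuous variables are small refinements the paper does not spell out, but they do not change the argument.
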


\textbf{Interpretation.} We unpack Theorem~\ref{thm:higher_order_needed} in the following steps.
\textbf{(1) Synergy changes joint dependence without altering single-modality marginals.}
By Definition~\ref{def:pure_synergy}, conditioning on \(\vectorname{q}_{\text{syn}}\) leaves each marginal entropy \(H(X_m\mid\vectorname{q}_r)\) unchanged, but reduces the joint entropy \(H(X_{1:M}\mid\vectorname{q}_r)\).
Since \(\mathrm{TC}=\sum_m H(X_m\mid\cdot)-H(X_{1:M}\mid\cdot)\), this reduction increases TC exactly by \(I(\vectorname{q}_{\text{syn}};X_{1:M}\mid\vectorname{q}_r)\).
\textbf{(2) The resulting dependence is higher-order.}
Conditional on \(\vectorname{q}_{syn}\) also preserves every pairwise marginal distribution 
given \(\vectorname{q}_r\); 
hence the TC difference in Theorem~\ref{thm:higher_order_needed} can only be attributed to irreducible multi-way dependence, which pairwise objectives cannot access.

\section{Method}

\begin{figure*}[t]
  \centering
  \includegraphics[width=\textwidth, height=8cm]{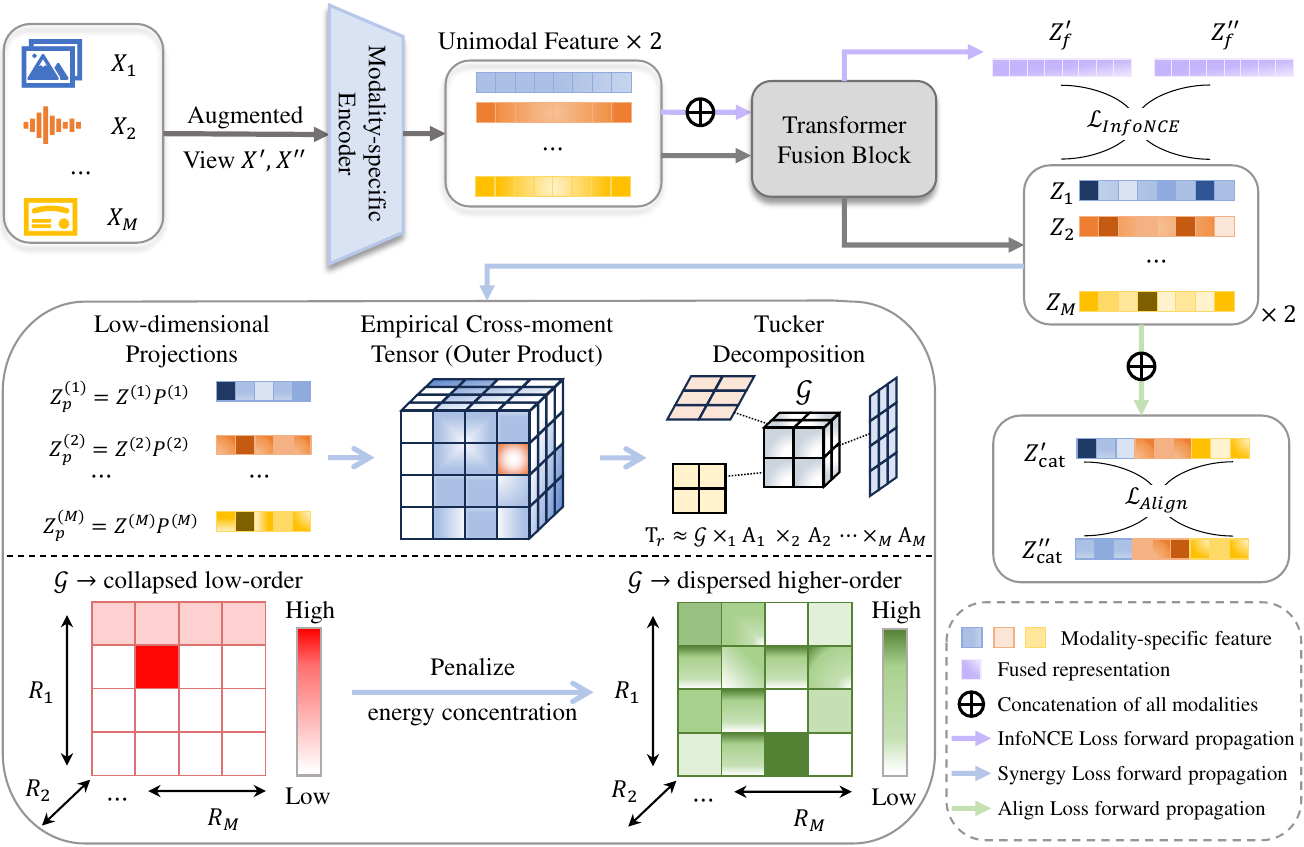}
  \caption{The overall pipeline of \model. 
  Given \(M\) modalities, we apply data augmentation to construct two views \(X^{\prime}\) and \(X^{\prime\prime}\), and encode them into unimodal representations \(\{\vectorname{Z}_m^{\prime}\}_{m=1}^{M}\) and \(\{\vectorname{Z}_m^{\prime\prime}\}_{m=1}^{M}\), as well as fused representations \(\vectorname{Z}_f^{\prime}\) and \(\vectorname{Z}_f^{\prime\prime}\). The contrastive objective \(\mathcal{L}_{\text{InfoNCE}}\) performs cross-view alignment between each unimodal representation and the fused one, while \(\mathcal{L}_{\text{syn}}\) and \(\mathcal{L}_{\text{align}}\) are computed from unimodal representations in both views.
  }
  \label{fig:flow_chart}
\end{figure*}

The overall pipeline of \model is illustrated in Fig.~\ref{fig:flow_chart}.
Let \(\vectorname{h}_v^{(m)}=f_m(X_v^{(m)})\) denote the unimodal encoder features, a shared Transformer \(g\) produces \(\vectorname{Z}_{v,u}^{(m)}=g(\vectorname{h}_v^{(m)})\) and \(\vectorname{Z}_{v,f}=g(\vectorname{h}_v^{(1)},\ldots,\vectorname{h}_v^{(M)})\), where \(v\in\{1,2\}\) indexes the augmented views.
Given the encoded modality embeddings, \model optimizes three complementary components.
First, a modality-fusion contrastive objective aligns each unimodal representation with the fused representation to extract cross-modal task-relevant signals.
Second, the synergy-aware regularization preserves the capacity of the fused representation to encode higher-order synergistic interactions.
Third, an auxiliary cross-view alignment objective stabilizes 
the shared and modality-specific subspaces while preventing representational collapse.

\subsection{Higher-Order Dependence Modeling for the Synergistic Subspace}

Theorems~\ref{thm:pairwise_insufficient} and~\ref{thm:higher_order_needed} isolate purely cross-modal, higher-order dependence as the sole information-theoretic carrier of synergy.
To translate this criterion into a practical objective, we require a mechanism to distinguish genuine multi-way interactions in the embedding space.
Inspired by~\cite{anandkumar2014tensor,kolda2009tensor, zadeh2017tensor}, we employ the \emph{empirical cross-moment tensor} that aggregates cross-modal dependencies across all interaction orders.
Here \(\otimes\) denotes the vector outer product and \(\times_m\) denotes multiplication along tensor mode \(m\).
For a fixed augmented view, given a mini-batch of \(B\) samples with modality embeddings \(\vectorname{Z}_m^{(b)}\in\mathbb{R}^{d_m}\),
the \(M\)-th order empirical cross-moment tensor \(\mathcal{T}\) is defined as
\begin{equation}
\mathcal{T} := \frac{1}{B}\sum_{b=1}^{B} \vectorname{Z}_1^{(b)} \otimes \vectorname{Z}_2^{(b)} \otimes \cdots \otimes \vectorname{Z}_M^{(b)} \;\in\; \mathbb{R}^{d_1 \times d_2 \times \cdots \times d_M},
\label{eq:cross_moment_tensor}
\end{equation}

whose \((i_1,\dots,i_M)\)-th entry estimates the mixed moment \(\mathbb{E}[Z_{1,i_1}\cdots Z_{M,i_M}]\). 
Its Tucker decomposition~\cite{kolda2009tensor} \(\mathcal{T}\approx\mathcal{G}\times_1\mathbf{A}_1\cdots\times_M\mathbf{A}_M\), 
where \(\mathbf{A}_m\in\mathbb{R}^{d_m\times r_m}\) are factor matrices, 
isolates the core tensor \(\mathcal{G}\) whose \emph{multilinear rank} characterizes the multilinear complexity of the modeled cross-moment tensor.
If the modalities interact only through a low-dimensional shared factor, \(\mathcal{T}\) inherits a low multilinear rank and its energy concentrates into a few separable components;
preserving capacity for higher-order dependencies motivates discouraging excessive energy concentration
in \(\mathcal{G}\).
This suggests a natural regularization strategy: \textbf{penalize energy concentration in \(\mathcal{G}\) to preserve representational capacity for higher-order interactions, and thus for synergy}. The following definition and lemmas formalize this diagnostic.

\begin{definition}[Redundant Low-Order Structure]
    \label{def:infor_red_def}
    Multimodal embeddings \(\{\vectorname{Z}_i\}_{i=1}^M\) are said to exhibit a \emph{redundant low-order structure} if \(\vectorname{Z}_i \approx \sum_{j \neq i} \mathbf{W}_{ij} \vectorname{Z}_j\) with \(\operatorname{rank}(\mathbf{W}_{ij}) = r_{ij} \ll d_j\), 
    where the approximation holds in the population sense: the reconstruction residual \(\boldsymbol{\delta}_i \coloneqq \vectorname{Z}_i - \sum_{j \neq i} \mathbf{W}_{ij} \vectorname{Z}_j\) satisfies \(\mathbb{E}[\|\boldsymbol{\delta}_i\|^2] \to 0\) under the shared-factor dominance regime.

\end{definition}

Under Definition~\ref{def:infor_red_def}, when the shared component dominates the embedding energy, the low-rank predictor \(\mathbf{W}_{ij}\) explains nearly all of \(\vectorname{Z}_i\), and the embeddings collapse into the redundant low-order regime.
We formally state this result in Proposition~\ref{prop:redundancy_low_rank}.
The following two lemmas 
connect redundant low-order structure to multilinear complexity and characterize single-component degeneration of the interaction core.

\begin{restatable}[Redundancy Implies Low Multilinear Complexity]{lemma}{lowRankRestated}
    \label{lemma:low_rank_red}
    If the embeddings exhibit a redundant low-order structure in the sense of Definition~\ref{def:infor_red_def}, then the cross-moment tensor \(\mathcal{T}=\mathbb{E}[\bigotimes_{m}\vectorname{Z}_m]\) admits a low multilinear-complexity approximation. In particular, the Tucker core \(\mathcal{G}\) of \(\mathcal{T}\) has at least one mode whose effective multilinear rank satisfies: 
    \[
    \operatorname{rank}_{(k)}(\mathcal{G})\leq \sum_{j \neq i} r_{ij} \quad \text{for modality } k = i.
    \]
\end{restatable}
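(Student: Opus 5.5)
The plan is to work with the mode-\(i\) unfolding of the population cross-moment tensor \(\mathcal{T}=\mathbb{E}[\bigotimes_m \vectorname{Z}_m]\). For a fixed index \(i\) supplied by Definition~\ref{def:infor_red_def}, write \(\mathcal{T}_{(i)}=\mathbb{E}\big[\vectorname{Z}_i\,\mathbf{w}_{-i}^\top\big]\in\mathbb{R}^{d_i\times\prod_{m\neq i}d_m}\), where \(\mathbf{w}_{-i}:=\bigotimes_{m\neq i}\vectorname{Z}_m\) is the (vectorized) Kronecker product of the remaining embeddings. Substituting \(\vectorname{Z}_i=\sum_{j\neq i}\mathbf{W}_{ij}\vectorname{Z}_j+\boldsymbol{\delta}_i\) splits the unfolding as
\[
\mathcal{T}_{(i)}=\sum_{j\neq i}\mathbf{W}_{ij}\,\mathbb{E}\big[\vectorname{Z}_j\,\mathbf{w}_{-i}^\top\big]+\mathbb{E}\big[\boldsymbol{\delta}_i\,\mathbf{w}_{-i}^\top\big]=:\mathcal{T}^{\mathrm{red}}_{(i)}+\mathcal{E}_{(i)}.
\]
The column space of \(\mathcal{T}^{\mathrm{red}}_{(i)}\) is contained in \(\sum_{j\neq i}\operatorname{range}(\mathbf{W}_{ij})\), so its rank is at most \(\sum_{j\neq i}r_{ij}\). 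Folding \(\mathcal{T}^{\mathrm{red}}_{(i)}\) back gives a tensor \(\mathcal{T}^{\mathrm{red}}\) whose mode-\(i\) rank obeys this bound. This tensor is the low multilinear-complexity approximation claimed in the statement.

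Next I will control the residual. By Cauchy--Schwarz, \(\|\mathcal{E}_{(i)}\|_F\le \sqrt{\mathbb{E}\|\boldsymbol{\delta}_i\|^2}\,\sqrt{\mathbb{E}\prod_{m\neq i}\|\vectorname{Z}_m\|^2}\). Under a mild moment assumption, namely bounded embeddings such as normalized ones (which I will state explicitly), the second factor is bounded. Hence \(\|\mathcal{T}-\mathcal{T}^{\mathrm{red}}\|_F\to 0\) as \(\mathbb{E}\|\boldsymbol{\delta}_i\|^2\to0\) in the shared-factor dominance regime. This step is where "\(\approx\)" must be made precise, and I expect it to be the main obstacle. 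The rank is not continuous, so the exact rank of \(\mathcal{T}_{(i)}\) need not drop. I will therefore interpret "effective multilinear rank" as an \(\varepsilon\)-rank: the number of singular values of \(\mathcal{T}_{(i)}\) exceeding \(\varepsilon\). Weyl's inequality then gives \(\sigma_{k}(\mathcal{T}_{(i)})\le\sigma_k(\mathcal{T}^{\mathrm{red}}_{(i)})+\|\mathcal{E}_{(i)}\|_2\). Since \(\sigma_k(\mathcal{T}^{\mathrm{red}}_{(i)})=0\) for \(k>\sum_{j\neq i}r_{ij}\), all singular values of \(\mathcal{T}_{(i)}\) beyond index \(\sum_{j\neq i}r_{ij}\) are at most \(\|\mathcal{E}_{(i)}\|_F\), which vanishes.

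Finally I will transfer the bound from \(\mathcal{T}\) to the Tucker core. In the HOSVD/Tucker form \(\mathcal{T}=\mathcal{G}\times_1\mathbf{A}_1\cdots\times_M\mathbf{A}_M\) with orthonormal-column factors, we have \(\mathcal{T}_{(k)}=\mathbf{A}_k\mathcal{G}_{(k)}\big(\bigotimes_{m\neq k}\mathbf{A}_m\big)^\top\). Multiplication by matrices with orthonormal columns preserves the nonzero singular values, so \(\mathcal{G}_{(k)}\) and \(\mathcal{T}_{(k)}\) have identical singular spectra. Taking \(k=i\), the effective rank of \(\mathcal{G}\) in mode \(i\) is at most \(\sum_{j\neq i}r_{ij}\). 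In the exact limit \(\boldsymbol{\delta}_i=0\) it equals the true rank, which yields \(\operatorname{rank}_{(i)}(\mathcal{G})\le\sum_{j\neq i}r_{ij}\) as stated. If the Tucker factors are truncated (i.e., \(\mathcal{T}\approx\) rather than \(=\)), the same argument applies to the truncated core, since truncation only removes singular directions.
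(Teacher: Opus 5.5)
Your proposal is correct and follows the paper's proof in every essential step: the mode-\(i\) unfolding, the column-space bound \(\operatorname{rank}(\mathcal{T}^{\mathrm{red}}_{(i)})\le\sum_{j\neq i}r_{ij}\), the Cauchy--Schwarz residual bound under normalized embeddings, the \(\varepsilon\)-rank reading of ``effective rank,'' and orthogonal invariance of singular values to pass to the core. The only difference is that you bound the tail singular values with Weyl's inequality where the paper uses Eckart--Young--Mirsky, and both give \(\sigma_k(\mathcal{T}_{(i)})\le\|\mathcal{E}_{(i)}\|_F\) for \(k>\sum_{j\neq i}r_{ij}\). For the truncated core, the cleaner justification is that \(\mathcal{G}_{(i)}=\mathbf{A}_i^\top\mathcal{T}_{(i)}\mathbf{B}\) with \(\mathbf{A}_i\) and \(\mathbf{B}=\bigotimes_{m\neq i}\mathbf{A}_m\) having orthonormal columns, so \(\sigma_k(\mathcal{G}_{(i)})\le\sigma_k(\mathcal{T}_{(i)})\): truncation in the other modes does not just ``remove singular directions,'' but it cannot increase any singular value.
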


\begin{restatable}[Energy Concentration Indicates Degenerate Structure]{lemma}{energyConcentrationRestated}
    \label{lemma:energy_concentration}
    Let
    \[
    \rho(\mathcal{G}) := \max_{i_1,\dots,i_M}\frac{\mathcal{G}_{i_1,\dots,i_M}^2}{\|\mathcal{G}\|_F^2}.
    \]
    If \(\rho(\mathcal{G}) \ge 1-\delta\) for some \(\delta\in[0,1]\), then \(\mathcal{G}\) is within relative Frobenius error at most \(\sqrt{\delta}\) of a one-sparse core tensor.
    For small \(\delta\), this indicates dominance by a single separable component in the Tucker core.

\end{restatable}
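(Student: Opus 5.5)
The plan is to exhibit the one-sparse approximant explicitly and compute its Frobenius error directly; no structural property of the Tucker decomposition is needed beyond the fact that \(\mathcal{G}\) is a real array with \(\|\mathcal{G}\|_F>0\). The quantity \(\rho(\mathcal{G})\) is only defined for a nonzero core, so this is implicitly assumed in the statement of Lemma~\ref{lemma:energy_concentration}.

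\textbf{Construction.} Let \((i_1^\star,\dots,i_M^\star)\) be an index attaining the maximum in the definition of \(\rho(\mathcal{G})\). Define the one-sparse tensor \(\mathcal{S}\) by
\[
\mathcal{S}_{i_1^\star,\dots,i_M^\star}=\mathcal{G}_{i_1^\star,\dots,i_M^\star},
\]
with all other entries set to zero.

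\textbf{Error computation.} The Frobenius norm is the entrywise \(\ell_2\) norm, and \(\mathcal{G}-\mathcal{S}\) vanishes at the selected index while agreeing with \(\mathcal{G}\) everywhere else. Hence
\[
\|\mathcal{G}-\mathcal{S}\|_F^2=\|\mathcal{G}\|_F^2-\mathcal{G}_{i_1^\star,\dots,i_M^\star}^2=\bigl(1-\rho(\mathcal{G})\bigr)\|\mathcal{G}\|_F^2\le \delta\,\|\mathcal{G}\|_F^2 .
\]
Taking square roots gives the relative error bound \(\|\mathcal{G}-\mathcal{S}\|_F/\|\mathcal{G}\|_F\le\sqrt{\delta}\). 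I would also note that \(\mathcal{S}\) is the best one-sparse approximation of \(\mathcal{G}\): for a fixed support, the optimal value is the corresponding entry of \(\mathcal{G}\), and choosing the largest-magnitude entry maximizes the energy removed.

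\textbf{Interpretation.} To justify the phrase ``single separable component,'' push \(\mathcal{S}\) back through the factor matrices. This gives
\[
\mathcal{S}\times_1\mathbf{A}_1\cdots\times_M\mathbf{A}_M=\mathcal{G}_{i^\star}\,\mathbf{a}^{(1)}_{i_1^\star}\otimes\cdots\otimes\mathbf{a}^{(M)}_{i_M^\star},
\]
where \(\mathbf{a}^{(m)}_{j}\) denotes the \(j\)-th column of \(\mathbf{A}_m\). This is a rank-one tensor. If the factor matrices have orthonormal columns, as in the standard HOSVD normalization, then the \(n\)-mode products preserve the Frobenius norm of the residual, so the same \(\sqrt{\delta}\) relative bound transfers to the reconstructed tensor.

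\textbf{Main obstacle.} The argument has no real technical obstacle. The only points needing care are the nonzero-core requirement and stating the orthonormality assumption explicitly if the bound is to be transferred from \(\mathcal{G}\) to the reconstructed tensor.
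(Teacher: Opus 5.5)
Your proof is correct and follows essentially the same route as the paper: keep the largest-magnitude entry of \(\mathcal{G}\), observe that \(\|\mathcal{G}-\mathcal{S}\|_F^2=\|\mathcal{G}\|_F^2-\max_{i_1,\dots,i_M}\mathcal{G}_{i_1,\dots,i_M}^2\le\delta\|\mathcal{G}\|_F^2\), and take square roots, with the same nonzero-core caveat the paper notes. Your additional remarks go beyond what the paper states but are correct: that \(\mathcal{S}\) is the best one-sparse approximant, and that the bound transfers to the reconstructed tensor when the factor matrices have orthonormal columns.
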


Definition~\ref{def:infor_red_def}, Lemma~\ref{lemma:low_rank_red}, and Lemma~\ref{lemma:energy_concentration} connect
information theory and representation geometry:
when shared factors dominate the embeddings, the cross-modal predictor is low-rank, which constrains the multilinear rank of the core tensor \(\mathcal{G}\); 
extreme energy concentration indicates dominance by a single separable component.
Penalizing this concentration therefore preserves the representational capacity needed to encode the higher-order dependence that, by Theorem~\ref{thm:higher_order_needed}, carries pure synergistic information.

To avoid the cost of computing the full-dimensional cross-moment tensor,
we first project each embedding into an \(r_p\)-dimensional lower-dimensional subspace.
For each augmented view \(v\in\{1,2\}\), we define \(\vectorname{Z}_{p,v}^{(m,b)}=\vectorname{Z}_{v,u}^{(m,b)}\mathbf{P}^{(m)}\), where \(\mathbf{P}^{(m)}\in\mathbb{R}^{d_m\times r_p}\) is constructed by QR decomposition of a random Gaussian matrix with independent entries and \(r_p < \min_m d_m\).
This projection preserves pairwise geometry up to a multiplicative distortion factor by the Johnson--Lindenstrauss lemma (Lemma~\ref{lemma:JL})~\cite{dasgupta2003elementary}. 
We then compute the projected empirical cross-moment tensor for each view over the mini-batch:
\begin{equation}
\mathcal{T}_{r_p,v} := \frac{1}{B}\sum_{b=1}^{B} \vectorname{Z}_{p,v}^{(1,b)} \otimes \vectorname{Z}_{p,v}^{(2,b)} \otimes \cdots \otimes \vectorname{Z}_{p,v}^{(M,b)} \;\in\; \mathbb{R}^{r_p \times r_p \times \cdots \times r_p},
  \label{eq:lower_dim_tensor}
\end{equation}
and perform Tucker decomposition \(\mathcal{T}_{r_p,v} \approx \mathcal{G}_v\times_1\mathbf{A}_{1,v}\cdots\times_M\mathbf{A}_{M,v}\), yielding the core tensor \(\mathcal{G}_v\).

To prevent excessive core-energy concentration, we introduce the following margin-based synergy-aware regularization grounded in Lemma~\ref{lemma:energy_concentration}:
\begin{equation}
\mathcal{L}_{\text{syn}} = \frac{1}{2}\sum_{v=1}^{2}\left[\xi - \left(1 - \rho(\mathcal{G}_v)\right)\right]^+,
  \label{eq:syn_loss}
\end{equation}
where \(\xi \in (0,1)\) is a margin hyperparameter controlling the minimum energy dispersion.
Throughout this paper, we adopt the notation \([\cdot]^+ = \text{ReLU}(\cdot)\).
By penalizing \(\rho(\mathcal{G}_v)>1-\xi\) in each view, \(\mathcal{L}_{\text{syn}}\) discourages one-sparse degeneration in the Tucker cores, thereby preserving multi-way coupling capacity.

While \(\mathcal{L}_{\text{syn}}\) preserves the capacity for higher-order dependencies, it does not actively recover synergistic information.
To learn such information,
we employ a contrastive alignment strategy from~\cite{dufumier2024align}, which aligns each unimodal representation with the fused multimodal representation, \ie,
\begin{equation}
  \mathcal{L}_{\text{InfoNCE}}^{(m)}
  :=
  -\frac{1}{2}\!\left(
  \hat{I}_{\text{NCE}}(\vectorname{Z}_{1,u}^{(m)},\vectorname{Z}_{2,f})
  +
  \hat{I}_{\text{NCE}}(\vectorname{Z}_{2,u}^{(m)},\vectorname{Z}_{1,f})
  \right).
  \label{eq:infonce_two_views}
\end{equation}
Here \(\hat{I}_{\text{NCE}}\) is the InfoNCE mutual-information estimator~\cite{oord2018representation}:
\[
\hat{I}_{\text{NCE}}(\vectorname{Z}, \vectorname{Z'}) = \mathbb{E}_{\substack{\vectorname{z},\vectorname{z'}_{\text{pos}}\sim p(\vectorname{Z},\vectorname{Z'})\\ \vectorname{z'}_{\text{neg}} \sim p(\vectorname{Z'})}} \left[ \log \frac{\exp \operatorname{sim}(\vectorname{z},\vectorname{z'}_{\text{pos}})}{\exp \operatorname{sim}(\vectorname{z},\vectorname{z'}_{\text{pos}}) + \sum_{\vectorname{z'}_{\text{neg}}}\exp \operatorname{sim} (\vectorname{z}, \vectorname{z'}_{\text{neg}})}\right].
\]
We use \(\operatorname{sim}(\cdot, \cdot)\) to denote cosine similarity throughout this work.
\(\vectorname{Z}_{1,u}^{(m)}\) and \(\vectorname{Z}_{2,u}^{(m)}\) denote the \(m\)-th unimodal embeddings from the two views,
and \(\vectorname{Z}_{1,f}\), \(\vectorname{Z}_{2,f}\) denote the corresponding fused representations.
Critically, although the above loss objective only captures shared and unique information in \cite{dufumier2024align},
the proposed \(\mathcal{L}_{\text{syn}}\) 
discourages excessive energy concentration 
within the core tensor to maintain the higher-order representational capacity of \(\vectorname{Z}_f\).
Consequently, the identical contrastive loss further enables the model to extract synergistic components.

\subsection{Contrastive alignment for shared and unique subspaces}

Although Eq.~\eqref{eq:syn_loss} preserves capacity for synergistic interactions, it does not by itself control the geometry of unimodal embeddings.
This is important because multimodal InfoNCE can suffer from competing alignment directions and attraction--repulsion conflicts, leading to modality gaps and unstable optimization~\cite{yin2026towards}.
Inspired by~\cite{bardes2021vicreg, zbontar2021barlow}, we introduce an auxiliary alignment regularizer that 
encourages cross-view consistency while preserving variation in modality-specific directions.

For each view \(v\in\{1,2\}\), 
let \(\hat{\vectorname{Z}}_i \in \mathbb{R}^{B \times D}\) denote the \(\ell_2\)-normalized representation that concatenates all unimodal embeddings along the feature dimension for the \(i\)-th view,
where $B$ is the batch size and $D$ is the total feature dimension. 
The alignment objective is:
\begin{equation}
\mathcal{L}_{\text{align}} = \mathcal{L}_{\text{inv}} + \mathcal{L}_{\text{var}},
\label{eq:align_loss}
\end{equation}
where \[
\mathcal{L}_{\text{inv}} = \frac{1}{B}\sum_{b=1}^{B}\bigl(1 - \operatorname{sim}(\hat{\vectorname{Z}}_1^{(b)},\hat{\vectorname{Z}}_2^{(b)})\bigr)^2
\quad \text{and} \quad
\mathcal{L}_{\text{var}} = \frac{1}{D}\sum_{d=1}^{D}\left[\gamma - \sqrt{\operatorname{Var}\!\left(\bar{\vectorname{Z}}_d\right) + \epsilon}\right]^+,
\]
with \(\bar{\vectorname{Z}}=(\hat{\vectorname{Z}}_1+\hat{\vectorname{Z}}_2)/2\), where \(\gamma\) is the variance threshold and \(\epsilon\) is a small constant.

Among these two terms, \(\mathcal{L}_{\text{inv}}\) encourages cross-view agreement of shared factors at the sample level,
and \(\mathcal{L}_{\text{var}}\) penalizes standard deviations below \(\gamma\) (0.5 by default) along each embedding dimension to discourage coordinate-wise collapse.
Because each modality embedding \(\vectorname{Z}_v^{(m)}\) is individually normalized before concatenation, modality-specific directions are preserved in \(\hat{\vectorname{Z}}_v\); \(\mathcal{L}_{\text{var}}\) then
encourages variance along \emph{all} dimensions, helping preserve modality-specific variation during alignment.

\textbf{Overall objective.}
The final loss combines all three components:
\begin{equation}
  \mathcal{L}_{\text{total}} = \sum_{m=1}^{M}\mathcal{L}_{\text{InfoNCE}}^{(m)} + \lambda_a \mathcal{L}_{\text{align}} + \lambda_s \mathcal{L}_{\text{syn}},
  \label{eq:total_loss}
\end{equation}
where \(\lambda_a\) and \(\lambda_s\) are hyperparameters.

\section{Experiments}
\label{sec:exper}

We evaluate \model on both synthetic and real-world benchmarks to validate its effectiveness in capturing synergistic interactions and enhancing downstream task performance. 
In the synthetic setting, we follow the Trifeature dataset protocol from~\cite{dufumier2024align}, which provides a controlled environment to explicitly isolate redundancy, uniqueness, and synergy. 
To assess practical utility, we conduct extensive experiments on Multibench~\cite{liang2021multibench}, a comprehensive multimodal benchmark spanning diverse domains and modality combinations (\eg, healthcare, affective computing, \textit{etc.}). 
First, we evaluate the trimodal setting, which naturally exhibits higher-order statistical dependencies essential for synergy, thereby directly testing our method's capacity to model complex multi-way interactions.
Subsequently, we evaluate generalization in bimodal configurations, which constitute the predominant configuration in practical applications. 
For evaluation, we use linear probing, \textit{i.e.}, freezing the pre-trained
feature extractor and training a linear classifier on top of the learned representations. 
The resulting downstream performance serves as a direct proxy for the discriminative quality and generalization capacity of the learned multimodal representations.
Detailed experimental settings are provided in Appendix~\ref{sec:exp_details}.

\subsection{Experiments on Synthetic Datasets with Trifeature}

\begin{wraptable}{r}{0.64\textwidth}
  \vspace{-1em}
  \centering
  \setlength{\tabcolsep}{4pt}
  \caption{Linear probing accuracy (in \%) for redundancy (shape), uniqueness (texture), and synergy (color and texture) on the Trifeature dataset. \(^\clubsuit\) denotes results from~\cite{dufumier2024align}. }
  \label{tab:trifeature_results}
  \small
  \begin{tabular}{lccc}
    \toprule
    \textit{Model} & \textit{Redundancy$\uparrow$} & \textit{Uniqueness$\uparrow$} & \textit{Synergy$\uparrow$} \\
    \midrule
    Cross$^\clubsuit$~\cite{radford2021learning} & $\textbf{100.0}$ & $11.6$ & $50.0$ \\
    Cross+Self$^\clubsuit$~\cite{yuan2021multimodal} & $99.7$ & $86.9$ & $50.0$ \\
    FactorCL$^\clubsuit$~\cite{liang2023factorized} & $99.8$ & $62.5$ & $46.5$ \\
    CoMM~\cite{dufumier2024align} & $99.9_{\pm0.06}$ & $86.8_{\pm2.99}$ & $71.4_{\pm3.47}$ \\
    \model (ours) & $99.7_{\pm0.14}$ & $\bm{92.6_{\pm3.04}}$ & $\bm{82.3_{\pm1.61}}$ \\
    \bottomrule
  \end{tabular}
  \vspace{-0.8em}
\end{wraptable}

We follow the Trifeature~\cite{hermann2020shapes} synthetic protocol of CoMM~\cite{dufumier2024align}. 
For redundancy and uniqueness, \emph{shape} is treated as shared and \emph{texture} as modality-specific.
The dataset consists of image pairs sharing identical shapes but exhibiting independent textures. 
We formulate two linear probing objectives: (1) predicting the common shape class (redundancy), and (2) identifying the texture from a single modality (uniqueness), both yielding a 10\% random-guessing baseline. 
To evaluate synergy, we impose a deterministic cross-modal mapping $\mathcal{M}$ that links the texture of $X_1$ to the color of $X_2$ (\eg, stripes $\mapsto$ red), restricting training exclusively to pairs that adhere to this constraint.
Evaluation proceeds via a binary compliance task defined as $Y=\mathds{1}[\,(\operatorname{tex}(X_1),\operatorname{col}(X_2))\in\mathcal{M}\,]$, with a 50\% chance baseline.

Experimental results are summarized in Tab.~\ref{tab:trifeature_results}.
Cross-modality constraints with InfoNCE~\cite{radford2021learning} ("Cross") effectively capture redundant shape information but yield near-chance performance on uniqueness.
Self-supervised constraints on each encoder ("Cross+Self"), FactorCL~\cite{liang2023factorized}, and CoMM~\cite{dufumier2024align} all improve uniqueness performance, with CoMM achieving the best result as it implicitly models the synergistic factor and is the only baseline that significantly improves synergy capture.
In comparison, \model achieves the highest uniqueness and synergy performance, outperforming CoMM by 5.8\% on uniqueness and 10.9\% on synergy.

\subsection{Experiments on Multibench}

To further validate the practical utility of \model, we conduct extensive experiments on Multibench~\cite{liang2021multibench}, which includes multiple datasets with different modalities and domains.
These datasets provide a comprehensive evaluation of the model's ability to learn multimodal representations that are effective for downstream tasks.
Further dataset details are provided in Appendix~\ref{sec:dataset_details}.

\subsubsection{Experiments with 3 Modalities: UR-FUNNY, MOSI, and MOSEI}

\begin{wraptable}{r}{0.68\textwidth}
  \vspace{-1em}
  \centering
  \caption{Linear probing top-1 accuracy (in \%) for classification 
  tasks on trimodal Multibench. }
  \label{tab:tri_results}
  \small
  \begin{tabular}{lcccc}
      \toprule
      \textit{Model} & \textit{UR-FUNNY$\uparrow$} & \textit{MOSI$\uparrow$} & \textit{MOSEI$\uparrow$} & \textbf{Average} \\
      \midrule
      Tri\_CLIP & $60.6_{\pm0.79}$ & $60.1_{\pm3.43}$ & $64.1_{\pm0.42}$ & $61.6$ \\
      CMC~\cite{tian2020cmc} & $60.1_{\pm0.78}$ & $62.4_{\pm1.67}$ & $62.7_{\pm0.72}$ & $61.73$ \\
      ConFu~\cite{koutoupis2025more} & $61.3_{\pm0.96}$ & $62.5_{\pm2.66}$ & $64.8_{\pm0.61}$ & $62.87$ \\
      CoMM~\cite{dufumier2024align} & $64.8_{\pm1.13}$ & $66.0_{\pm2.27}$ & $69.9_{\pm0.34}$ & $66.9$ \\
      \model (ours) & $\bm{65.5_{\pm0.56}}$ & $\bm{67.2_{\pm1.28}}$ & $\bm{70.5_{\pm0.20}}$ & $\bm{67.73}$ \\
      \bottomrule
  \end{tabular}
  \vspace{-0.9em}
\end{wraptable}

We first evaluate the trimodal setting since it naturally exhibits higher-order statistical dependencies essential for synergy.
The data preprocessing steps can be found in Appendix~\ref{sec:dataset_details}.
Those experiments are conducted in trimodal configurations using identical encoders and modality arrangements, with all models trained directly on encoded representations.
We compare against several established multimodal contrastive learning frameworks: Tri\_CLIP (trained with pairwise contrastive objectives across three modalities), ConFu~\cite{koutoupis2025more}, CMC~\cite{tian2020cmc}, and CoMM~\cite{dufumier2024align}.
As shown in Tab.~\ref{tab:tri_results}, 
while ConFu extends Tri\_CLIP by incorporating a higher-order contrastive loss, it yields only limited performance gains. 
In contrast, \model consistently surpasses all baselines across the three datasets and outperforms the strongest competitor --- CoMM --- by 0.7\% on UR-FUNNY, 1.2\% on MOSI, and 0.6\% on MOSEI.
These empirical results demonstrate that \model's explicit modeling of higher-order interactions effectively enhances representation quality and downstream task performance, thereby validating the theoretical principles established in our information-theoretic analysis.

\subsubsection{Experiments with 2 Modalities}

Beyond the trimodal setting, 
we further evaluate whether \model remains effective in the more common bimodal scenario and test the practical robustness of our synergy-aware objective across heterogeneous domains, including healthcare and affective understanding.
Adopting the data preprocessing steps established in~\cite{liang2023factorized},
\begin{table}[ht]
  \centering
  \vspace{-1.2em}
  \caption{Linear probing top-1 accuracy (in \%) for classification tasks on bimodal Multibench. \(^\clubsuit\) denotes results are from~\cite{liang2023factorized}. }
  \label{tab:Multibench_results}
  \small
  \begin{tabular}{lccccc}
      \toprule
  \multirow{2}{*}{\textit{Model}} & \multicolumn{5}{c}{\textit{Classification}} \\
  \cmidrule(lr){2-6}
  & \textit{MIMIC}$\uparrow$ & \textit{MOSI}$\uparrow$ & \textit{UR-FUNNY}$\uparrow$ & \textit{MUStARD}$\uparrow$ & \textbf{Average}$\uparrow$\\
      \midrule
  $\text{Cross}^\clubsuit$~\cite{radford2021learning} & $66.7_{\pm0.1}$ & $47.8_{\pm1.8}$ & $50.1_{\pm1.9}$ & $53.5_{\pm2.9}$ & $54.52$ \\
  $\text{Cross+Self}^\clubsuit$~\cite{yuan2021multimodal} & $65.49_{\pm0.0}$ & $49.0_{\pm1.1}$ & $59.9_{\pm0.9}$ & $53.9_{\pm4.0}$ & $57.07$ \\
  $\text{FactorCL}^\clubsuit$~\cite{liang2023factorized} & $67.3_{\pm0.0}$ & $51.2_{\pm1.6}$ & $60.5_{\pm0.8}$ & $55.80_{\pm0.9}$ & $58.7$ \\
  CoMM~\cite{dufumier2024align} & $66.4_{\pm0.4}$ & $63.7_{\pm2.5}$ & $63.3_{\pm0.5}$ & $64.4_{\pm1.1}$ & $64.45$ \\
  \model (ours) & $\bm{68.0_{\pm0.7}}$ & $\bm{65.8_{\pm1.2}}$ & $\bm{63.9_{\pm0.9}}$ & $\bm{66.6_{\pm2.7}}$ & $\bm{66.08}$ \\
  \bottomrule
  \end{tabular}
  \vspace{-0.6em}
\end{table}
we compare our method with Cross~\cite{radford2021learning}, Cross+Self~\cite{yuan2021multimodal}, FactorCL~\cite{liang2023factorized}, and CoMM~\cite{dufumier2024align}.
Tab.~\ref{tab:Multibench_results} summarizes the results.
\model achieves the best performance on all four bimodal classification tasks.
Compared with the strongest baseline, CoMM, \model improves accuracy by 1.6 percentage points on MIMIC, 2.1 on MOSI, 0.6 on UR-FUNNY, and 2.2 on MUStARD, leading to an average gain of 1.63 percentage points.
These results indicate that the proposed synergy-aware regularization is not limited to interactions among three or more modalities; it also improves representation learning in practical bimodal settings by preserving non-degenerate cross-modal dependence.

\section{Ablation Studies}

\textbf{Loss functions.} 
To evaluate the contribution of each loss term in \model, we compare different loss combinations on the Trifeature dataset for capturing multimodal interactions.
\begin{table}[ht]
  \centering
  \vspace{-1.2em}
  \setlength{\tabcolsep}{5pt}
  \caption{
    Linear probing accuracy (in \%) of multimodal interactions on the Trifeature dataset under different combinations of loss terms in Eq.~\ref{eq:total_loss}.
  }
  \label{tab:loss_component_ablation}
  \small
  \begin{tabular}{cccccc}
    \toprule
    \multicolumn{3}{c}{\textit{Loss components}} & \multirow{2}{*}{\textit{Redundancy}\(\uparrow\)} & \multirow{2}{*}{\textit{Uniqueness}\(\uparrow\)} & \multirow{2}{*}{\textit{Synergy}\(\uparrow\)} \\
    \cmidrule(lr){1-3}
    \(\mathcal{L}_{\mathrm{InfoNCE}}\) & \(\mathcal{L}_{\mathrm{align}}\) & \(\mathcal{L}_{\mathrm{syn}}\) & & & \\
    \midrule
    \(\times\) & \(\checkmark\) & \(\checkmark\) & \(28.5_{\pm2.74}\) & \(12.5_{\pm3.15}\) & \(50.0_{\pm0.00}\) \\
    \(\checkmark\) & \(\times\) & \(\times\) & \(99.9_{\pm0.10}\) & \(87.2_{\pm2.13}\) & \(50.0_{\pm0.00}\) \\
    \(\checkmark\) & \(\times\) & \(\checkmark\) & \(99.9_{\pm0.05}\) & \(86.6_{\pm2.79}\) & \(72.4_{\pm3.16}\) \\
    \(\checkmark\) & \(\checkmark\) & \(\times\) & \(99.9_{\pm0.09}\) & \(85.1_{\pm6.86}\) & \(78.0_{\pm0.44}\) \\
    \(\checkmark\) & \(\checkmark\) & \(\checkmark\) & \(99.7_{\pm0.14}\) & \(\mathbf{92.6_{\pm3.04}}\) & \(\mathbf{82.3_{\pm1.61}}\) \\
    \bottomrule
  \end{tabular}
  \vspace{-0.6em}
\end{table}

  As shown in Tab.~\ref{tab:loss_component_ablation}, both \(\mathcal{L}_{\mathrm{InfoNCE}}\) alone and \(\mathcal{L}_{\mathrm{align}}+\mathcal{L}_{\mathrm{syn}}\) perform poorly on synergy, achieving only chance-level accuracy (50.0\%). The variant without \(\mathcal{L}_{\mathrm{InfoNCE}}\) performs worse overall, with substantially lower redundancy and uniqueness accuracy.
Adding \(\mathcal{L}_{\mathrm{syn}}\) to \(\mathcal{L}_{\mathrm{InfoNCE}}\) increases synergy accuracy to 72.4\%, while adding \(\mathcal{L}_{\mathrm{align}}\) increases it to 78.0\%. The full objective further improves synergy accuracy from 78.0\% to 82.3\% and uniqueness accuracy from 85.1\% to 92.6\% compared with \(\mathcal{L}_{\mathrm{InfoNCE}}+\mathcal{L}_{\mathrm{align}}\).
Meanwhile, redundancy remains nearly saturated. 
These results suggest that \model benefits from the complementary contributions of all components.

\begin{wraptable}{r}{0.58\textwidth}
    \vspace{-0.8em}
    \centering
    \setlength{\tabcolsep}{3.5pt}
    \caption{
      Quantitative evaluation of InterSHAP on Multibench~\cite{liang2021multibench} datasets.
      \(\times\) denotes the model without \(\mathcal{L}_{\text{syn}}\), and \(\checkmark\) denotes the full objective of \model.
    }
    \label{tab:inter_shap}
    \small
    \begin{tabular}{@{} *{5}{c} @{}}
      \toprule
      \multirow{2}{*}{\(\mathcal{L}_{\text{syn}}\)} 
      & \multicolumn{3}{c}{\textit{InterSHAP Value (\%)}} 
      & \multirow{2}{*}{\textbf{Average}} \\
      \cmidrule(lr){2-4}
      & \textit{UR-FUNNY} & \textit{MOSI} & \textit{MOSEI}  \\
      \midrule
      $\times$ & $7.78_{\pm4.56}$ & $6.82_{\pm3.73}$ & $1.35_{\pm0.53}$ & 5.32 \\
      \checkmark & $9.25_{\pm5.56}$ & $9.90_{\pm4.81}$ & $1.47_{\pm0.40}$ & 6.87 \\
      \midrule
      \textbf{Improvement} & $\bm{+1.47}$ & $\bm{+3.08}$ & $\bm{+0.12}$ & $\bm{+1.55}$ \\
      \bottomrule
    \end{tabular}
    \vspace{-0.8em}
  \end{wraptable}

  \textbf{Interaction attribution.}
  To evaluate whether the proposed synergy-aware regularization (Eq.~\eqref{eq:syn_loss}) increases cross-modal interaction attribution, we report InterSHAP~\cite{wenderoth2025measuring} as an attribution-based interaction metric.
Tab.~\ref{tab:inter_shap} reports InterSHAP values on three trimodal datasets from Multibench~\cite{liang2021multibench}.
  Without \(\mathcal{L}_{\text{syn}}\) (Eq.~\eqref{eq:syn_loss}), the interaction attribution remains modest.
  In contrast, adding the proposed synergy-aware regularization consistently raises mean InterSHAP on all datasets, improving the average score from 5.32\% to 6.87\%.
  The gains are most pronounced on MOSI and UR-FUNNY, indicating stronger use of cross-modal interaction patterns after applying the synergy-aware regularization.
  The slight increase on MOSEI indicates a smaller mean change in interaction attribution.
  Moreover, since interaction attributions in real-world benchmarks are often numerically small~\cite{wenderoth2025measuring}, the consistent upward shift across datasets provides complementary behavioral evidence that \(\mathcal{L}_{\text{syn}}\) encourages the model to exploit cross-modal interactions.
  The metric definition and further comparisons with CoMM and a multilinear-rank control are provided in Appendix~\ref{sec:intership}.

\begin{wraptable}{r}{0.62\textwidth}
  \vspace{-1.1em}
  \centering
  \setlength{\tabcolsep}{4pt}
  \caption{Linear probing accuracy (in \%) of multimodal interactions on the Trifeature dataset.
  Concat+Linear: concatenate the modality embeddings and pass through a linear layer.
  }
  \label{tab:fusion_compare}
  \small
  \begin{tabular}{lcccc}
    \toprule
    \textit{Fusion} & $R$ & $U$ & $S$ & \textbf{Average} \\
    \midrule
    Concat+Linear & $99.9_{\pm0.1}$ & $65.4_{\pm10.73}$ & $50.0_{\pm0.00}$ & $71.77$ \\
    \model & $99.7_{\pm0.14}$ & $\bm{92.6_{\pm3.04}}$ & $\bm{82.3_{\pm1.61}}$ & $\bm{91.83}$ \\
    \bottomrule
  \end{tabular}
  \vspace{-0.8em}
\end{wraptable}

\textbf{Fusion module.} 
We compare the attention-based fusion module with a linear fusion baseline on the Trifeature dataset.
As shown in Tab.~\ref{tab:fusion_compare}, 
the linear fusion module fails to capture synergistic information, whereas \model substantially improves the performance of synergy.
This result suggests that synergy-aware regularization is effective only when the fusion module has sufficient capacity to model higher-order cross-modal interactions.
It also provides empirical support for Definition~\ref{def:infor_red_def}: 
if the information in one modality can be linearly approximated from another modality, the resulting representation is unlikely to encode genuinely synergistic information.

\section{Related Work}

\textbf{Multimodal contrastive learning.}
Multimodal contrastive learning aligns cross-modal representations by maximizing mutual information through constructed positive and negative pairs~\cite{radford2021learning,oord2018representation,tian2020cmc}. 
Positive pairs are typically constructed by augmenting the same instance, leveraging intra- and cross-modal variations as mutual supervision signals~\cite{jing2020self,caron2020unsupervised,chen2020simple,zbontar2021barlow}. 
The effective utilization of the three forms of multimodal information (\ie, redundancy, uniqueness, and synergy) is driving the advancement of representation learning~\cite{wen2026dependency}.
Predominant pairwise contrastive frameworks primarily capture shared information across modalities~\cite{radford2021learning,jia2021scaling}, often at the expense of modality-unique features. 
MV-DHEL~\cite{koromilas2025principled} jointly aligns all views of the same sample while imposing view-specific uniformity to decouple alignment from uniformity.
While recent efforts have pivoted toward disentangling and preserving modality-specific information~\cite{liang2023factorized, wang2024decoupling,guo2025disentangling,qian2025decalign}, 
and methods such as CoMM~\cite{dufumier2024align} and InfMasking~\cite{wen2025infmasking} implicitly capture synergistic interactions, the explicit modeling and recovery of synergy remain an open challenge.
Building on CoMM's fusion and alignment paradigm, \model adds an explicit higher-order structural regularizer.

\textbf{Higher-order multimodal dependence.}
Recent self-supervised methods move beyond pairwise alignment by explicitly modeling multi-way cross-modal dependence.
Symile~\cite{saporta2024contrasting} uses a scalar multilinear critic to optimize a contrastive lower bound on total correlation, while ConFu~\cite{koutoupis2025more} aligns fused modality subsets with the remaining modality alongside pairwise alignment.
M3G~\cite{pmlr-v235-piran24a} promotes joint consistency across views by minimizing the gap between ground-truth and optimal matching costs under entropy-regularized multi-marginal optimal transport.
Geometric criteria provide an alternative route to joint alignment: GRAM~\cite{DBLP:conf/iclr/CicchettiGSC25} aligns modalities by minimizing the Gramian volume spanned by their embeddings, and TRIANGLE~\cite{cicchetti2025triangle} enforces tri-modal alignment by replacing cosine similarity with a triangle-area similarity computed in the embedding span.
Nevertheless, these approaches largely target joint dependence or alignment, while \model additionally regularizes the empirical cross-moment structure to preserve capacity for synergistic interactions.

\textbf{Tensor-based multimodal fusion.}
Tensor fusion provides an explicit mechanism for modeling multiplicative cross-modal interactions~\cite{anandkumar2014tensor,kolda2009tensor}.
TFN~\cite{zadeh2017tensor} enumerates unimodal, bimodal, and trimodal interactions by forming outer products of modality embeddings, but incurs exponential complexity. 
LMF~\cite{liu2018efficient} alleviates this via low-rank factorization while retaining multiplicative coupling. 
Diverging from tensor-based fusion paradigms, we instead employ Tucker decomposition~\cite{kolda2009tensor} as a \emph{regularization} prior, explicitly enforcing a non-degenerate multi-way structure in the fused representation.

\section{Conclusion}
\label{sec:conclusion}

We presented \model, a multimodal contrastive representation learning framework that addresses the synergistic gap left by pairwise contrastive learning.
By proving under the condition of pure synergy that synergistic information equals a conditional total-correlation difference recoverable only through higher-order dependence,
we motivated synergy-aware regularization on the Tucker-decomposed cross-moment core that preserves multi-way interaction capacity alongside multimodal contrastive objectives.
Experiments on synthetic and real-world benchmarks show consistent gains, especially on the controlled synergy task. 
Current limitations include the assumption of cleanly separated latent factors;
future work will explore relaxing this assumption and extending the framework to large multimodal models. 
Further discussions can be found in Appendix~\ref{sec:limitations_future_work}.

{\large\textbf{Acknowledgments}}
This work was supported by the Sichuan Provincial Innovation Group Project (Grant No. 2024NSFTD0054), the Major Science and Technology Special Project of the Sichuan Provincial Department of Science and Technology (Grant No. 2024ZDZX0002), Fundamental Research Funds for the Central Universities (JBK202511081), the National Natural Science Foundation of China (No. U24A20323 , No. 62502397, No.12526646, No 126011059). 
We thank anonymous reviewers for their insightful feedback.

\newpage
{
\small
\bibliographystyle{plain}
\bibliography{reference.bib}
}

\newpage
\appendix

\begin{center}
    \LARGE \textbf{Appendix}
\end{center}

\section{Limitations and Future Work}
\label{sec:limitations_future_work}

\textbf{Limitations.}
Our theoretical analysis relies on Assumption~\ref{Ass:syn_multi_factor}.
In particular, the core results are stated under the pure-synergy condition in Definition~\ref{def:pure_synergy}, where the synergistic factor carries no mutual information with any proper subset of modalities once the shared factor is conditioned on.
This assumption is useful for isolating the failure mode of pairwise contrastive learning and for deriving a clean connection between synergistic information and conditional total correlation. 
However, real multimodal data may exhibit mixed interaction structures: synergistic signals can coexist with residual pairwise dependence, partially redundant cues, or modality-specific shortcuts.
In such cases, our theory can be interpreted as characterizing an idealized regime that clarifies when higher-order modeling is necessary.
Although the empirical results in Sec.~\ref{sec:exper} and the InterSHAP analysis in Appendix~\ref{sec:intership} suggest that the proposed regularization is beneficial beyond purely synthetic settings, a more systematic robustness study under approximate or partially overlapping synergy remains open.

\textbf{Future Work.}
Future work can extend Definition~\ref{def:pure_synergy} from pure synergy to approximate and mixed-synergy regimes. 
One promising direction is to develop data-dependent diagnostics for estimating the degree of redundancy, uniqueness, and synergy in real benchmarks, which could guide when higher-order regularization is most useful and how strongly it should be applied.
Moreover, scaling the proposed objective to larger modality sets and integrating it with 
multimodal alignment objectives or large multimodal models~\cite{ICLR2026_3d14c058} may further test whether synergy-aware learning improves general-purpose multimodal reasoning.

\section{Experimental Details}
\label{sec:exp_details}

\textbf{Implementation details.} 
Unless otherwise specified, all methods are implemented in PyTorch and trained on a single NVIDIA RTX 4090 GPU with 24 GB of VRAM. 
We employ the AdamW optimizer~\cite{loshchilov2017decoupled} across all experiments. 
Each experimental configuration is trained over five independent runs using random seeds in the range \([42, 46]\).
We report the mean performance along with the standard deviation.
Models are trained for up to 100 epochs; the best checkpoint is selected based on validation accuracy, with early stopping applied when applicable to prevent overfitting. 
Detailed hyperparameter settings for each dataset and method are released alongside our source code.

\textbf{Differentiability and Gradient Path of Tucker Decomposition.}
\model supports end-to-end backpropagation through the Tucker decomposition by leveraging the PyTorch autograd engine coupled with the TensorLy library's differentiable backend. 
We compute the approximate Tucker decomposition \(\mathcal{T}_{r_p,v} \approx \mathcal{G}_v \times_1 A_{1,v} \cdots \times_M A_{M,v}\) using the Higher-Order Orthogonal Iteration (HOOI) algorithm~\cite{kroonenberg2008applied}, initialized via truncated SVD.
The autodiff engine dynamically traces the alternating least-squares updates and embedded SVD operations executed during each forward pass.
The solver employs a convergence tolerance (\(10^{-3}\)) on changes in reconstruction 
error alongside a maximum of 15 iterations,
ensuring that the computational graph records the executed optimization steps.
Consequently, gradients propagate backward through the actual HOOI trajectory of the executed finite iterations without truncation or manual intervention.

During backpropagation, the gradient with respect to the modality embeddings \(\vectorname{Z}_{v,u}^{(m,b)}\) is computed via the chain rule as:
\[
\frac{\partial\ell_{\mathrm{syn},v}}{\partial Z_{v,u,a}^{(m,b)}}
=\frac{1}{B}\sum_{j_1,\ldots,j_M}
\mathcal{D}_{j_1\ldots j_M}P^{(m)}_{a j_m}
\prod_{k\ne m} Z_{p,v,j_k}^{(k,b)},
\]
where the total derivative \(\mathcal{D}=\partial\ell_{\mathrm{syn},v}/\partial\mathcal{T}_{r_p,v}\) encompasses the full derivative chain through the traced HOOI steps and SVD differentials, \(\ell_{\mathrm{syn},v}\) denotes the per-view hinge in Eq.~\eqref{eq:syn_loss}, and \(\vectorname{Z}_{p,v}^{(m,b)}=\vectorname{Z}_{v,u}^{(m,b)}P^{(m)}\).
This term is evaluated automatically by the framework via unrolled automatic differentiation, requiring no manual Jacobian derivation.
Autograd also accounts for the factor \(1/2\) from two-view averaging.
The projection matrix \(P^{(m)}\) is constructed once via QR decomposition of a random Gaussian matrix with a fixed seed and cached for both views and remains constant throughout training.
As a static linear transformation, it introduces no additional optimization variables and provides a gradient path from the interaction core back to the trainable representation modules.

We provide the analysis of computational and optimization efficiency in Appendix~\ref{sec:computational_complexity}.
Notably, the synergy-aware regularization \(\mathcal{L}_{\text{syn}}\) employs a margin-based activation that, when numerical clamps are inactive, remains dormant in each view unless the core tensor energy concentration exceeds the prescribed threshold \(1-\xi\).
The implementation floors \(\|\mathcal{G}_v\|_F^2\) at \(10^{-6}\), clips the remaining-energy ratio to \([10^{-6},1-10^{-6}]\), and averages the two per-view hinges.
This sparse activation mechanism sets the direct hinge gradient to zero when inactive, while the implementation still evaluates forward and backward operations through the Tucker decomposition module.

\textbf{Model configuration.}
For dataset-specific encoder architectures, modality-specific data augmentations, and latent converters, we follow the same configurations as CoMM~\cite{dufumier2024align}. 

The fusion module in \model adopts a Transformer-based encoder to integrate modality-specific representations, following the design of CoMM~\cite{dufumier2024align}.
Each layer comprises a multi-head self-attention mechanism followed by a position-wise feed-forward network, both equipped with residual connections and layer normalization. 
In bimodal settings, we employ a single-layer Transformer with 8 attention heads, whereas trimodal configurations utilize a 2-layer Transformer with the same head count.
A learnable \texttt{[CLS]} token is prepended to the input sequence to aggregate cross-modal information into a unified global representation. 
For ConFu~\cite{koutoupis2025more}, we replicate the original implementation by using a 2-layer MLP as the subset fusion module.

\section{Dataset Details}
\label{sec:dataset_details}

\subsection{Trifeature}

The Trifeature dataset~\cite{hermann2020shapes} serves as a controlled benchmark for probing feature learning in visual neural networks. 
It comprises three distinct attributes---shape, color, and texture---each instantiated with 10 categorical variants, yielding 1,000 unique compositions. 
We adopt the standard split of 800 compositions for training and 200 for evaluation.
To enhance sample diversity, each training composition is augmented with three randomized instances, where shapes undergo independent rotations uniformly sampled from \([-45^\circ, 45^\circ]\). All shapes are rendered within a \(128 \times 128\) region, then randomly positioned on a \(224 \times 224\) canvas while preserving full visibility; 
color and texture are applied analogously. 
Following~\cite{dufumier2024align}, we construct bimodal pairs from these instances, resulting in 10,000 training and 4,096 test pairs drawn \textit{i.i.d.} from the same generative distribution.

\subsection{Multibench}

According to~\cite{liang2021multibench}, all datasets below have been systematically de-identified to eliminate personally identifiable information and ensure strict privacy compliance. 
Multibench~\cite{liang2021multibench} provides access to the extracted features and labels for each dataset. 

\begin{itemize}
    \item \textbf{MIMIC}~\cite{johnson2016mimic} is a large-scale critical care dataset comprising de-identified clinical records from over 38,000 patients admitted to intensive care units between 2001 and 2012. 
    Each sample integrates two heterogeneous modalities: 
    (i) a temporal modality consisting of 24-hour longitudinal clinical measurements (12-dimensional vectors), 
    and (ii) a static tabular modality encoding demographic features (5-dimensional vectors). 

    We follow the protocol of prior work~\cite{liang2021multibench,liang2023factorized,dufumier2024align} and formulate the task as binary disease classification: predicting whether a patient's diagnosis falls within ICD-9 (\textit{International Statistical Classification of Diseases and Related Health Problems}) code group 7 (codes 460--519). 

    \item \textbf{MOSI}~\cite{zadeh2016mosi} contains 2,199 YouTube-sourced video clips curated for multimodal sentiment analysis. Each sample comprises three modalities: visual, audio, and text.
    Ground-truth sentiment scores are originally annotated on a continuous scale from -3 to 3; 
    following~\cite{liang2023factorized}, these labels are binarized into positive and negative classes. 

    In our experiments, we utilize the textual and visual modalities in bimodal configurations, while the trimodal setting incorporates all three modalities to evaluate the model's ability to integrate complementary information across modalities.

    \item \textbf{MOSEI}~\cite{zadeh2018mosei} is a large-scale multimodal benchmark for sentence-level sentiment analysis and emotion recognition, containing almost 23,000 monologue video clips. 
    It comprises more than 65 hours of annotated video content, spanning 250 distinct categories and featuring contributions from over 1,000 speakers. 
    Each sample includes aligned visual, acoustic, and textual modalities, with sentiment labels originally annotated on a continuous scale from -3 to 3.

    In our experiments, we use its sentiment annotations and, following~\cite{liang2023factorized}, binarize them into positive and negative classes. We then evaluate the model's performance in trimodal settings.

    \item \textbf{UR-FUNNY}~\cite{hasan2019ur} is a large-scale benchmark for humor detection, curated from 1,866 TED talks and containing 16,514 multimodal samples. 
    Each instance integrates visual frames, acoustic signals, and aligned textual transcripts, with the goal of discerning humorous versus non-humorous content. 

    In our experiments, we utilize the textual and visual modalities in bimodal configurations, while the trimodal setting incorporates all three modalities.

    \item \textbf{MUStARD}~\cite{castro2019mustard} is a multimodal benchmark for sarcasm detection, compiled from clips of popular sitcoms such as \textit{Friends}. 
    The corpus contains 690 samples, each providing synchronized video, audio, and subtitle streams with binary sarcasm annotations. 

    In our experiments, we utilize the textual and visual modalities in bimodal configurations.

    \item 
      \textbf{Vision\&Touch}~\cite{lee2020making} is a robotic manipulation benchmark with 150 trajectories of 1,000 time steps each. Its observations combine RGB and depth imagery with force measurements and end-effector position and velocity.

    In our experiments, we utilize the visual and proprioceptive modalities in a bimodal binary classification task to predict whether contact will occur at the next time step.

\end{itemize}

\subsection{\texorpdfstring{MM-IMDb}{MM-IMDb}}

Multimodal IMDb (MM-IMDb)~\cite{arevalo2017gated} poses movie-genre prediction as a multilabel classification task with 23 categories. Its 25,959 movies are drawn from MovieLens 20M~\cite{movielensdataset} and include posters, plot descriptions, genre annotations, and metadata. We use the posters and plot descriptions as the image and text modalities, respectively, and evaluate weighted and macro F1. Although the dataset is included in MultiBench~\cite{liang2021multibench}, our experiments train on its original image and text inputs.

\section{Additional Experiments}
\label{sec:additional_experiments}

\subsection{Evaluations with InterSHAP}
\label{sec:intership}

  \textbf{Metric definition.}
Rather than scoring only downstream task performance, this measure inspects how predictions are formed by separating cross-modal interaction effects from modality-specific main effects.
Concretely, predictions \(f(x)\) are decomposed with the Shapley Interaction Index (SII), where pairwise terms represent interaction strength and diagonal terms represent unimodal contributions.
Let \(M\) be the number of modalities and let \(\phi_{ij}\) denote the SII attribution between modalities \(i\) and \(j\) (computed either per sample or aggregated over the dataset).
The InterSHAP score~\cite{wenderoth2025measuring} is then given by:
\begin{equation}
    \text{InterSHAP} = \frac{\sum_{\substack{i,j=1 \\ i \neq j}}^{M} |\phi_{ij}|}{\sum_{i,j=1}^{M} |\phi_{ij}|},
\end{equation}
where the numerator aggregates off-diagonal interaction attributions (\(i\neq j\)) and the denominator aggregates all absolute attributions, including diagonal main effects (\(i=j\)).
Larger values indicate that model decisions rely more on interaction terms than on unimodal effects.

Table~\ref{tab:rank_intershap} extends the comparison in Table~\ref{tab:inter_shap} with CoMM and a multilinear-rank control on the UR-FUNNY and MOSI datasets,
where the multilinear-rank control replaces \(\mathcal{L}_{\mathrm{syn}}\) with a low-rank penalty on the Tucker core while keeping the other losses unchanged.
We use this control to test whether a multilinear-rank restriction alone explains the mean InterSHAP gains. Its loss is
\(\mathcal{L}_{\mathrm{rank}}=\sum_{m=1}^{M}\sum_{i>k_m}\sigma_i^2(\mathcal{G}_{(m)})\). 
The penalty sums the squared singular values beyond the leading \(k_m\) singular values of each mode unfolding \(\mathcal{G}_{(m)}\).
This control encourages low multilinear rank, whereas \(\mathcal{L}_{\mathrm{syn}}\) activates only when one core coordinate exceeds the prescribed energy share.

\begin{table}[ht]
  \centering
  \caption{Further comparison with InterSHAP on the UR-FUNNY and MOSI datasets.}
  \label{tab:rank_intershap}
  \small
  \begin{tabular}{lcc}
    \toprule
    \textit{Model} & \textit{UR-FUNNY} & \textit{MOSI} \\
    \midrule
    CoMM & \(7.77_{\pm3.05}\) & \(3.82_{\pm1.90}\) \\
    HRIL without \(\mathcal{L}_{\mathrm{syn}}\) & \(7.78_{\pm4.56}\) & \(6.82_{\pm3.73}\) \\
    Multilinear-rank control & \(8.17_{\pm6.33}\) & \(5.87_{\pm4.26}\) \\
    HRIL & \(9.25_{\pm5.56}\) & \(9.90_{\pm4.81}\) \\
    \bottomrule
  \end{tabular}
\end{table}

  As shown in Table~\ref{tab:rank_intershap}, HRIL has the highest mean InterSHAP score on both datasets. 
  The multilinear-rank control penalty yields a small mean increase over the variant without synergy regularization on UR-FUNNY but a decrease on MOSI. 
  \model exceeds it by 1.08 and 4.03 percentage points, respectively.
  These results support preserving higher-order interaction capacity within the prescribed Tucker subspace. The regularizer \(\mathcal{L}_{\mathrm{syn}}\) discourages excessive dominance by a single core coordinate while allowing the core to remain sparse.

\subsection{Hyperparameter Sensitivity}
\label{sec:hyperparameter_sensitivity}
Table~\ref{tab:hyperparameter_sensitivity} reports sensitivity analyses on trimodal UR-FUNNY and MOSEI, with the remaining hyperparameters fixed. 
We vary the Tucker rank \(r_m\), the projection dimension \(r_p\) and the margin \(\xi\). The Tucker rank determines the retained dimension along each mode, while \(r_p\) determines the number of entries \(r_p^M\) in the cross-moment tensor. The margin \(\xi\) sets the threshold \(1-\xi\) above which core-energy concentration is penalized.

\begin{table}[ht]
  \centering
  \caption{Sensitivity analysis: accuracy (\%)}
  \label{tab:hyperparameter_sensitivity}
  \small
  \begin{tabular}{llcc}
    \toprule
    \textit{Hyperparameter} & \textit{Value} & \textit{UR-FUNNY} & \textit{MOSEI} \\
    \midrule
    \multirow{3}{*}{Tucker rank \(r_m\)}
      & 8  & \(65.0_{\pm0.54}\) & \(69.9_{\pm0.31}\) \\
      & 16 & \(64.9_{\pm1.34}\) & \(70.0_{\pm0.47}\) \\
      & 24 & \(65.5_{\pm0.56}\) & \(70.5_{\pm0.20}\) \\
    \midrule
    \multirow{4}{*}{Projection dimension \(r_p\)}
      & 8  & \(64.8_{\pm0.59}\) & \(70.2_{\pm0.26}\) \\
      & 16 & \(64.9_{\pm0.66}\) & \(70.2_{\pm0.20}\) \\
      & 24 & \(65.5_{\pm0.56}\) & \(70.5_{\pm0.20}\) \\
      & 32 & \(65.5_{\pm0.90}\) & \(70.2_{\pm0.45}\) \\
    \midrule
    \multirow{4}{*}{Margin \(\xi\)}
      & 0.1 & \(65.2_{\pm1.15}\) & \(70.0_{\pm0.40}\) \\
      & 0.3 & \(65.2_{\pm1.05}\) & \(70.5_{\pm0.38}\) \\
      & 0.5 & \(65.5_{\pm0.56}\) & \(70.5_{\pm0.20}\) \\
      & 0.7 & \(65.2_{\pm1.00}\) & \(70.3_{\pm0.25}\) \\
    \bottomrule
  \end{tabular}
\end{table}

As shown in Table~\ref{tab:hyperparameter_sensitivity}, a Tucker rank of 24 gives the highest mean accuracy on both datasets. A projection dimension of 24 also attains the highest mean accuracy in its separate sweep, while increasing it to 32 enlarges the tensor without further mean improvement. Since the tensor is constructed before Tucker decomposition, reducing the decomposition rank does not lower the tensor construction cost. The margin sweep is comparatively stable: mean accuracy varies by 0.3 percentage points on UR-FUNNY and 0.5 on MOSEI, with \(\xi=0.5\) among the best settings on both datasets.

\subsection{Batch Size Robustness}
\label{sec:batch_size_robustness}
Since the synergy-aware regularization is computed from the empirical cross-moment tensor (Eq.~\eqref{eq:lower_dim_tensor}),
we further evaluate the sensitivity of \model to the mini-batch size.
As shown in Fig.~\ref{fig:batch_size_appendix}, \model demonstrates more pronounced robustness on datasets with larger sample sizes.
This trend is expected because larger datasets provide more reliable mini-batch estimates of cross-modal moments, whereas smaller datasets may introduce higher estimation variance.
In addition, \model achieves its best performance with a batch size of 64, and further increasing the batch size does not lead to consistent gains.
These results indicate that \model does not require excessively large batches for more reliable estimates of cross-modal moments,
while maintaining both scalability and computational efficiency.

\begin{figure}[htbp]
  \centering
  \captionsetup{skip=4pt}
  \includegraphics[width=0.6\linewidth]{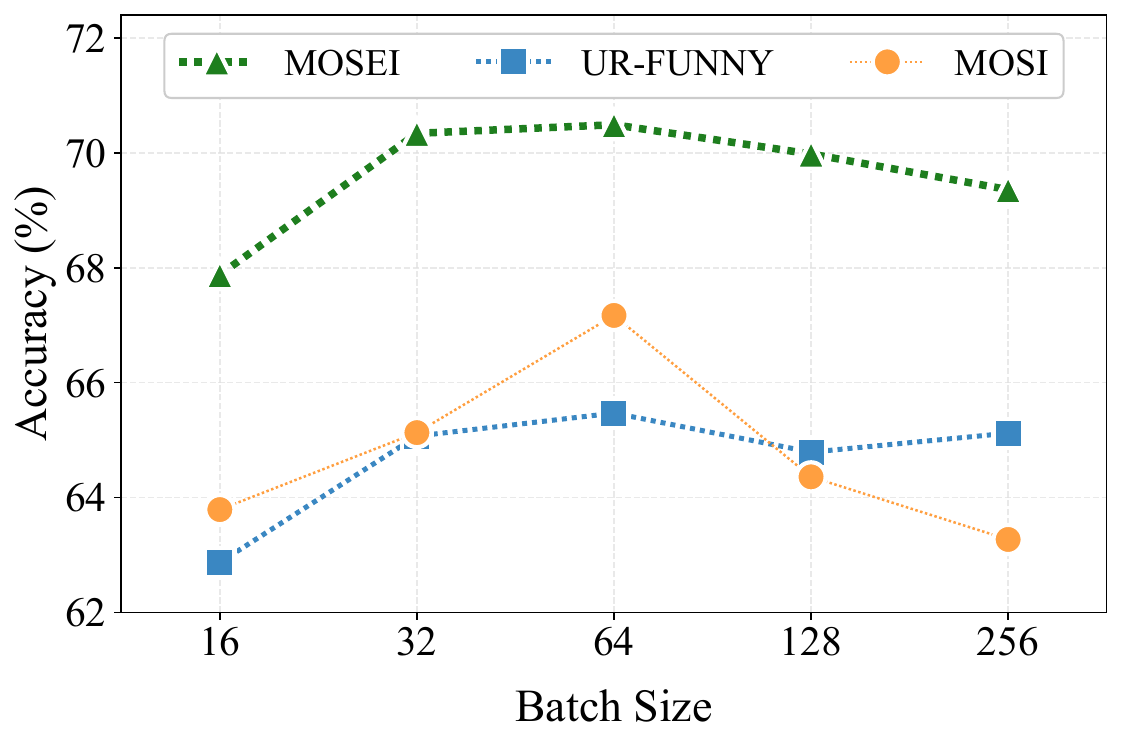}
  \caption{Performance sensitivity to mini-batch size on MOSEI, UR-FUNNY, and MOSI.}
  \label{fig:batch_size_appendix}
\end{figure}

\subsection{Experiments on MM-IMDb}
\label{sec:mmimdb_experiments}

We further evaluate \model on MM-IMDb multilabel movie-genre prediction using the original posters and plot descriptions. Table~\ref{tab:imdb_results} reports linear-probing results, where V and L denote visual and textual inputs. 
\begin{table}[ht]
  \centering
  \caption{Linear probing F1 scores (\%) on MM-IMDb. \(^\triangle\) indicates further training on unlabeled data. \(^\clubsuit\) denotes results from~\cite{dufumier2024align}.}
  \label{tab:imdb_results}
  \small
  \setlength{\tabcolsep}{3pt}
  \begin{tabular}{lccc}
    \toprule
    \textit{Model} & \textit{Modalities} & \textit{weighted-F1$\uparrow$} & \textit{macro-F1$\uparrow$} \\
    \midrule
    SimCLR\(^{\clubsuit\triangle}\)~\cite{chen2020simple} & V & \(40.35_{\pm0.23}\) & \(27.99_{\pm0.33}\) \\
    \multirow{3}{*}{CLIP\(^\clubsuit\)~\cite{radford2021learning}} & V & \(51.5\) & \(40.8\) \\
      & L & \(51.0\) & \(43.0\) \\
      & V+L & \(58.9\) & \(50.9\) \\
    SLIP\(^{\clubsuit\triangle}\)~\cite{mu2022slip} & V+L & \(56.54_{\pm0.19}\) & \(47.35_{\pm0.27}\) \\
    CLIP\(^{\clubsuit\triangle}\)~\cite{radford2021learning} & V+L & \(54.49_{\pm0.19}\) & \(44.94_{\pm0.30}\) \\
    CoMM\(_{\text{(CLIP backbone)}}\) & V+L & \(61.29_{\pm0.73}\) & \(53.79_{\pm0.22}\) \\
    \model\(_{\text{(ours, CLIP backbone)}}\) & V+L & \(\mathbf{62.30}_{\pm0.26}\) & \(\mathbf{55.44}_{\pm0.26}\) \\
    \bottomrule
  \end{tabular}
\end{table}
The CLIP results first show that combining posters and plot descriptions improves over either modality alone, indicating that the joint input provides additional predictive value. With the same backbone, \model achieves weighted and macro F1 scores of 62.30\% and 55.44\%, exceeding CoMM by 1.01 and 1.65 percentage points, respectively. 
Thus, by considering multilabel classification directly from raw image and text inputs, this experiment broadens the evaluation beyond the original MultiBench experiments and further demonstrates the effectiveness of \model.

\subsection{Experiments on Vision\&Touch}
\label{sec:vision_touch_experiments}
\begin{wraptable}{r}{0.4\textwidth}
  \centering
  \vspace{-\baselineskip}
  \caption{Linear probing accuracy (\%) on Vision\&Touch contact prediction (V\&T CP).}
  \label{tab:vision_touch_results}
  \small
  \setlength{\tabcolsep}{3pt}
  \begin{tabular}{lc}
    \toprule
    \textit{Model} & \textit{V\&T CP$\uparrow$} \\
    \midrule
    Cross & \(86.3_{\pm0.25}\) \\
    Cross+Self & \(87.6_{\pm0.26}\) \\
    CoMM & \(87.0_{\pm1.77}\) \\
    \model (ours) & \(\mathbf{88.6}_{\pm0.14}\) \\
    \bottomrule
  \end{tabular}
  \vspace{-\baselineskip}
\end{wraptable}
To evaluate the effectiveness of \model on a dataset with a larger sample size, we further conduct experiments on Vision\&Touch~\cite{lee2020making}, which contains 147,000 samples. Given visual and proprioceptive observations, the task is to predict whether contact will occur at the next time step. 

As shown in Table~\ref{tab:vision_touch_results}, adding the self-contrastive objective to Cross raises contact-prediction accuracy from 86.3\% to 87.6\%, making Cross+Self the strongest listed baseline. Our method reaches 88.6\%, improving on Cross+Self by 1.0 percentage point and on CoMM by 1.6 points. These results further demonstrate the effectiveness of \model on a substantially larger dataset for robotic contact prediction.

\subsection{Computational Complexity and Runtime Analysis}
\label{sec:computational_complexity}

\begin{table}[ht]
  \centering
  \caption{
    Training efficiency comparison between CoMM~\cite{dufumier2024align} and \model on trimodal Multibench~\cite{liang2021multibench} datasets.
    Both methods use the same multimodal backbone.
    We report the average wall-clock step time, optimizer update time, and the percentage of each step spent in backward computation.
  }
  \label{tab:complexity_analysis}
  \small
  \begin{tabular}{@{} *{5}{c} @{}}
    \toprule
    \multirow{2}{*}{\textit{Dataset}} & \multirow{2}{*}{\textit{Model}} & \multicolumn{3}{c}{\textit{Training Efficiency Metrics}} \\
    \cmidrule(lr){3-5}
     & & \textit{Step Time (ms)} & \textit{Optimizer Time (ms)} & \textit{Backward / Step Ratio (\%)} \\
    \midrule
    \multirow{2}{*}{UR-FUNNY} & CoMM & 90.73 & 3.86 & 53.82 \\
     & \model & 155.65 & 3.47 & 46.23 \\
    \midrule
    \multirow{2}{*}{MOSI} & CoMM & 90.00 & 3.90 & 54.07 \\
     & \model & 156.95 & 3.47 & 46.96 \\
    \midrule
    \multirow{2}{*}{MOSEI} & CoMM & 90.57 & 4.06 & 53.99 \\
     & \model & 159.25  & 3.63  & 47.56 \\
    \bottomrule
  \end{tabular}
\end{table}

As shown in Tab.~\ref{tab:complexity_analysis}, \model increases the per-step runtime from roughly 90 ms to 156--159 ms across the three datasets compared with CoMM~\cite{dufumier2024align}.
The optimizer time remains nearly unchanged, indicating that the overhead does not come from additional trainable parameters or a heavier parameter update.
Instead, the extra cost mainly comes from the forward and backward computation of the additional loss terms, especially the cross-moment tensor regularization in Eq.~\eqref{eq:syn_loss}.
For a mini-batch of \(B\) samples and \(M\) modalities, each projected to \(r_p\) dimensions, explicit construction of the cross-moment tensor takes \(O(B r_p^M)\) time, and the resulting tensor occupies \(O(r_p^M)\) storage before decomposition. 
With the batch size and model configuration fixed, increasing the dataset size requires more batches per epoch without changing the loss computation per batch.

These additional computations are confined to training, as the synergy-aware regularizer is not used during inference. Given the performance improvements in the evaluated settings, we regard the additional training cost as a reasonable trade-off. Potential ways to reduce this cost include TensorSketch-based Tucker approximation~\cite{NEURIPS2018_45a766fa}, which uses compressed representations in Tucker computations, and randomized tensor sparsification~\cite{DBLP:conf/sdm/Tsourakakis10}, which samples and reweights tensor entries before decomposition. Applying these approximations to our Tucker-core regularizer and evaluating their effects on training remain future work.

\section{Broader Impact}
\label{sec:broader_impact}

This work advances multimodal representation learning by explicitly preserving synergistic interactions across heterogeneous data sources. 
Its primary societal benefit lies in enabling more robust fusion frameworks for high-stakes domains such as clinical decision support, where joint modality reasoning is critical. 
Given the methodological nature of the contribution, direct societal risks are minimal; 
however, we emphasize that downstream deployment should adhere to established safeguards regarding data privacy, algorithmic fairness, and computational sustainability.

\section{Pseudo-code}
\label{sec:appendix-pseudo-code}

Algorithm~\ref{alg:hril} summarizes the training procedure of \model, using the same notation as the main paper.
Here \(D=\sum_{m=1}^{M}d_m\). We use the InfoNCE mutual-information estimator~\cite{oord2018representation} defined in the main paper:
\[
  \hat{I}_{\text{NCE}}(\vectorname{Z}, \vectorname{Z'}) = \mathbb{E}_{\substack{\vectorname{z},\vectorname{z'}_{\text{pos}}\sim p(\vectorname{Z},\vectorname{Z'})\\ \vectorname{z'}_{\text{neg}} \sim p(\vectorname{Z'})}} \left[ \log \frac{\exp \operatorname{sim}(\vectorname{z},\vectorname{z'}_{\text{pos}})}{\exp \operatorname{sim}(\vectorname{z},\vectorname{z'}_{\text{pos}}) + \sum_{\vectorname{z'}_{\text{neg}}}\exp \operatorname{sim} (\vectorname{z}, \vectorname{z'}_{\text{neg}})}\right],
\]
where \(\operatorname{sim}(\cdot, \cdot)\) denotes cosine similarity.

\begin{algorithm}[H]
\caption{\model training algorithm}
\begin{algorithmic}[1]

\Require Multimodal dataset \(\{X_1,\ldots,X_M\}\), label-preserving transformations \(\mathcal{T}^\star\), mini-batch size \(B\), unimodal encoders \(\{f_m\}_{m=1}^{M}\), shared Transformer \(g\), QR projection matrices \(\{P^{(m)}\}_{m=1}^{M}\), projection dimension \(r_p\), Tucker ranks \((r_1,\ldots,r_M)\), margin \(\xi\), alignment hyperparameters \(\gamma,\epsilon\), loss weights \(\lambda_a,\lambda_s\).
\For{each sampled mini-batch \(\{(\vectorname{x}^{(1,b)},\ldots,\vectorname{x}^{(M,b)})\}_{b=1}^{B}\)}
  \State draw \(t_1,t_2\sim \mathcal{T}^\star\)
  \For{\(b\gets 1\) to \(B\)}
    \For{\(m\gets 1\) to \(M\)}
      \State \(\vectorname{x}_{1}^{(m,b)},  \vectorname{x}_{2}^{(m,b)} \gets t_1(\vectorname{x}^{(m,b)}),  t_2(\vectorname{x}^{(m,b)})\)
\State \(\vectorname{h}_{1}^{(m,b)},\vectorname{h}_{2}^{(m,b)}\gets f_m(\vectorname{x}_{1}^{(m,b)}),f_m(\vectorname{x}_{2}^{(m,b)})\)
      \State \(\vectorname{Z}_{1,u}^{(m,b)},\vectorname{Z}_{2,u}^{(m,b)}\gets g(\vectorname{h}_{1}^{(m,b)}),g(\vectorname{h}_{2}^{(m,b)})\)
      \State \(\vectorname{Z}_{p,1}^{(m,b)}, \vectorname{Z}_{p,2}^{(m,b)} \gets \vectorname{Z}_{1,u}^{(m,b)} P^{(m)}, \vectorname{Z}_{2,u}^{(m,b)} P^{(m)}\)
    \EndFor
\State \(\vectorname{Z}_{1,f}^{(b)}\gets g(\vectorname{h}_{1}^{(1,b)},\ldots,\vectorname{h}_{1}^{(M,b)})\)
\State \(\vectorname{Z}_{2,f}^{(b)}\gets g(\vectorname{h}_{2}^{(1,b)},\ldots,\vectorname{h}_{2}^{(M,b)})\)
  \EndFor

  \For{\(m\gets 1\) to \(M\)}
    \State \(\mathcal{L}_{\text{InfoNCE}}^{(m)} \gets -\frac{1}{2}\Big(\hat{I}_{\text{NCE}}(\vectorname{Z}_{1,u}^{(m)},\vectorname{Z}_{2,f}) + \hat{I}_{\text{NCE}}(\vectorname{Z}_{2,u}^{(m)},\vectorname{Z}_{1,f})\Big)\)

  \EndFor

  \For {each view \(v\in\{1,2\}\)}
    \State \(\mathcal{T}_{r_p,v} \gets \frac{1}{B}\sum_{b=1}^{B} \vectorname{Z}_{p,v}^{(1,b)}\otimes\cdots\otimes\vectorname{Z}_{p,v}^{(M,b)}\)
    \State Tucker decompose \(\mathcal{T}_{r_p,v} \approx \mathcal{G}_v\times_1\mathbf{A}_{1,v}\cdots\times_M\mathbf{A}_{M,v}\) with ranks \((r_1,\ldots,r_M)\)
    \State \(\rho(\mathcal{G}_v) \gets \max_{i_1,\ldots,i_M} \mathcal{G}_{i_1\ldots i_M,v}^2/\|\mathcal{G}_v\|_F^2\)

    \State \(\tilde{\vectorname{Z}}_{v}^{(b)} \gets [\tilde{\vectorname{Z}}_{v}^{(1,b)};\ldots;\tilde{\vectorname{Z}}_{v}^{(M,b)}]\) for all \(b\), where \(\tilde{\vectorname{Z}}_{v}^{(m,b)} := \vectorname{Z}_{v,u}^{(m,b)}/\|\vectorname{Z}_{v,u}^{(m,b)}\|\)
    \State \(\hat{\vectorname{Z}}_{v}^{(b)} \gets \tilde{\vectorname{Z}}_{v}^{(b)}/\|\tilde{\vectorname{Z}}_{v}^{(b)}\|\) for all \(b\)
  \EndFor

  \State \(\mathcal{L}_{\text{syn}} \gets \Big([\xi - (1-\rho(\mathcal{G}_1))]^+ + [\xi - (1-\rho(\mathcal{G}_2))]^+\Big)/2\)
  \State \(\mathcal{L}_{\text{inv}} \gets \frac{1}{B}\sum_{b=1}^{B}\bigl(1-\text{sim}(\hat{\vectorname{Z}}_{1}^{(b)},\hat{\vectorname{Z}}_{2}^{(b)})\bigr)^2\)
  \State \(\bar{\vectorname{Z}} \gets (\hat{\vectorname{Z}}_{1}+\hat{\vectorname{Z}}_{2})/2\) 
  \State \(\mathcal{L}_{\text{var}} \gets \frac{1}{D}\sum_{d=1}^{D}\left[\gamma-\sqrt{\operatorname{Var}\!\left(\bar{\vectorname{Z}}_{d}\right)+\epsilon}\right]^+\)
  \State \(\mathcal{L}_{\text{align}} \gets \mathcal{L}_{\text{inv}} + \mathcal{L}_{\text{var}}\)

  \State \(\mathcal{L}_{\text{total}} \gets \sum_{m=1}^{M}\mathcal{L}_{\text{InfoNCE}}^{(m)} + \lambda_a\mathcal{L}_{\text{align}} + \lambda_s\mathcal{L}_{\text{syn}}\)
  \State update \((f_m)_{m\in[1..M]}\) and \(g\) to minimize \(\mathcal{L}_{\text{total}}\)
\EndFor
\State \textbf{return} \((f_m)_{m\in[1..M]}, g\)
\end{algorithmic}
\label{alg:hril}
\end{algorithm}

\section{Proofs}
\label{sec:all_proof}

\pairwiseInsufficientRestated*
\begin{proof}
We prove the two claims in order.

\textbf{Step 1: pairwise alignment captures at most \(\vectorname{q}_r\).}

\emph{(a) Pure synergy is inaccessible to individual modality pairs.}
For any pair \(i\neq j\) with \(M\ge3\), the set \(\{i,j\}\) is a proper subset of the modalities. Definition~\ref{def:pure_synergy} therefore gives
\[
I(\vectorname{q}_{\text{syn}};X_i,X_j\mid\vectorname{q}_r)=0.
\]
Thus, a modality pair contains no information about the pure synergistic factor once the shared factor is given.

\emph{(b) The pairwise-minimal representation excludes non-shared information.}
For \(\vectorname{Z}_{\text{pair}}^{\min}\), we use the shared information characterization in Eq.~\eqref{eq:pairwise_shared_condition}.
The encoding relation \(\vectorname{Z}=f(X)\) in the task setup ensures that a representation is a function of the multimodal observation. Conditional data processing consequently gives
\[
\begin{aligned}
0 &\le I(\vectorname{Z}_{\text{pair}}^{\min};\vectorname{q}_{1:M},\vectorname{q}_{\text{syn}}\mid\vectorname{q}_r)\\
&\le I(\vectorname{Z}_{\text{pair}}^{\min};X_{1:M}\mid\vectorname{q}_r)=0,
\end{aligned}
\]
where the last equality is Eq.~\eqref{eq:pairwise_shared_condition}. Hence,
\[
\vectorname{Z}_{\text{pair}}^{\min}\perp(\vectorname{q}_{1:M},\vectorname{q}_{\text{syn}})\mid\vectorname{q}_r.
\]

\emph{(c) Upper bound on task information.}
Assumption~\ref{Ass:syn_multi_factor} states that
\(Y\perp X_{1:M}\mid(\vectorname{q}_r,\vectorname{q}_{1:M},\vectorname{q}_{\text{syn}})\).
Since \(\vectorname{Z}_{\text{pair}}^{\min}\) is a function of \(X_{1:M}\), this also implies
\[
Y\perp\vectorname{Z}_{\text{pair}}^{\min}\mid(\vectorname{q}_r,\vectorname{q}_{1:M},\vectorname{q}_{\text{syn}}).
\]
Combining this conditional independence with part~(b), the chain rule gives
\[
\begin{aligned}
&I(\vectorname{Z}_{\text{pair}}^{\min};Y,\vectorname{q}_{1:M},\vectorname{q}_{\text{syn}}\mid\vectorname{q}_r)\\
&\quad=I(\vectorname{Z}_{\text{pair}}^{\min};\vectorname{q}_{1:M},\vectorname{q}_{\text{syn}}\mid\vectorname{q}_r)
+I(\vectorname{Z}_{\text{pair}}^{\min};Y\mid\vectorname{q}_r,\vectorname{q}_{1:M},\vectorname{q}_{\text{syn}})\\
&\quad=0.
\end{aligned}
\]
By nonnegativity of conditional mutual information, \(I(Y;\vectorname{Z}_{\text{pair}}^{\min}\mid\vectorname{q}_r)=0\). Another application of the chain rule yields
\[
\begin{aligned}
I(Y;\vectorname{Z}_{\text{pair}}^{\min})
&=I(Y;\vectorname{q}_r)
+I(Y;\vectorname{Z}_{\text{pair}}^{\min}\mid\vectorname{q}_r)
-I(Y;\vectorname{q}_r\mid\vectorname{Z}_{\text{pair}}^{\min})\\
&\le I(Y;\vectorname{q}_r).
\end{aligned}
\]

\textbf{Step 2: three-layer gap decomposition.}
By the chain rule applied to the latent factors,
\[
\begin{aligned}
I(Y;\vectorname{q}_r,\vectorname{q}_{1:M},\vectorname{q}_{\text{syn}})
&=I(Y;\vectorname{q}_r)
+I(Y;\vectorname{q}_{1:M}\mid\vectorname{q}_r)\\
&\quad+I(Y;\vectorname{q}_{\text{syn}}\mid\vectorname{q}_r,\vectorname{q}_{1:M}).
\end{aligned}
\]
Assumption~\ref{Ass:syn_multi_factor} and the condition stated immediately after it give, respectively,
\[
\begin{aligned}
I(Y;X_{1:M}\mid\vectorname{q}_r,\vectorname{q}_{1:M},\vectorname{q}_{\text{syn}})&=0,\\
I(Y;\vectorname{q}_r,\vectorname{q}_{1:M},\vectorname{q}_{\text{syn}}\mid X_{1:M})&=0.
\end{aligned}
\]
Expanding \(I(Y;X_{1:M},\vectorname{q}_r,\vectorname{q}_{1:M},\vectorname{q}_{\text{syn}})\) in the two possible orders therefore yields
\[
I(Y;X_{1:M})=I(Y;\vectorname{q}_r,\vectorname{q}_{1:M},\vectorname{q}_{\text{syn}}).
\]
Together with the bound in Step~1, this gives
\[
\begin{aligned}
I(Y;X_{1:M})-I(Y;\vectorname{Z}_{\text{pair}}^{\min})
&\ge I(Y;\vectorname{q}_{1:M}\mid\vectorname{q}_r)\\
&\quad+I(Y;\vectorname{q}_{\text{syn}}\mid\vectorname{q}_r,\vectorname{q}_{1:M}).
\end{aligned}
\]
The first term is the unique gap; the second is the synergistic gap.
The second term is strictly positive by hypothesis, so the total information gap is strictly positive.
\end{proof}

\conditionalTCRestated*

\begin{proof}
We provide the proof for \(M=3\), as the extension to general \(M\) is straightforward.

By definition,
\[
\mathrm{TC}(X_1,X_2,X_3\mid \vectorname{q}_r)
= H(X_1\mid \vectorname{q}_r)+H(X_2\mid \vectorname{q}_r)+H(X_3\mid \vectorname{q}_r)-H(X_1,X_2,X_3\mid \vectorname{q}_r).
\]
Using the chain rule,
\[
H(X_1,X_2,X_3\mid \vectorname{q}_r)
= H(X_1\mid \vectorname{q}_r)+H(X_2\mid X_1,\vectorname{q}_r)+H(X_3\mid X_1,X_2,\vectorname{q}_r).
\]
Substituting this identity yields
\[
\mathrm{TC}(X_1,X_2,X_3\mid \vectorname{q}_r)
= I(X_1;X_2\mid \vectorname{q}_r)+I(X_3;X_1,X_2\mid \vectorname{q}_r).
\]
The symmetric form follows by applying the chain rule under each permutation.
\end{proof}

\synergyTCRestated*

\begin{proof}
We first establish the above equation and further interpret this theorem.

\textbf{Step 1: the identity.}
By the definition of mutual information,
\[
I(\vectorname{q}_{\text{syn}};X_{1:M}\mid \vectorname{q}_r)
= H(X_{1:M}\mid \vectorname{q}_r) - H(X_{1:M}\mid \vectorname{q}_r,\vectorname{q}_{\text{syn}}).
\]
Recall that \(\mathrm{TC}(X_{1:M}\mid C):=\sum_{m}H(X_m\mid C)-H(X_{1:M}\mid C)\) for any conditioning variable \(C\). Rewriting both joint entropies:
\begin{align*}
H(X_{1:M}\mid \vectorname{q}_r)
&= \textstyle\sum_m H(X_m\mid \vectorname{q}_r) - \mathrm{TC}(X_{1:M}\mid \vectorname{q}_r),\\
H(X_{1:M}\mid \vectorname{q}_r,\vectorname{q}_{\text{syn}})
&= \textstyle\sum_m H(X_m\mid \vectorname{q}_r,\vectorname{q}_{\text{syn}}) - \mathrm{TC}(X_{1:M}\mid \vectorname{q}_r,\vectorname{q}_{\text{syn}}).
\end{align*}
Definition~\ref{def:pure_synergy} gives \(I(\vectorname{q}_{\text{syn}};X_m\mid\vectorname{q}_r)=0\) for every singleton \(\{m\}\subsetneq\{1,\dots,M\}\), so \(H(X_m\mid\vectorname{q}_r,\vectorname{q}_{\text{syn}})=H(X_m\mid\vectorname{q}_r)\). Substituting and cancelling:
\[
I(\vectorname{q}_{\text{syn}};X_{1:M}\mid \vectorname{q}_r)
= \mathrm{TC}(X_{1:M}\mid \vectorname{q}_r,\vectorname{q}_{\text{syn}})
  - \mathrm{TC}(X_{1:M}\mid \vectorname{q}_r).
\]

\textbf{Step 2: proper-subset marginals remain unchanged.}
Definition~\ref{def:pure_synergy} gives \(\vectorname{q}_{\text{syn}}\perp X_A\mid\vectorname{q}_r\) for every proper subset \(A\subsetneq\{1,\ldots,M\}\). Hence conditioning on \(\vectorname{q}_{\text{syn}}\) leaves all proper-subset marginals unchanged, and the TC difference comes from the change in the full joint distribution. For \(M\ge3\), this includes every pairwise marginal.
\end{proof}

\begin{proposition}[Shared-Factor Dominance Implies Redundant Low-Order Structure]
  \label{prop:redundancy_low_rank}
  Under Assumption~\ref{Ass:syn_multi_factor} and the representation model (Eq.~\eqref{eq:factor_decomposition}), 
  let \(W_{ij}^* = \arg\min_{W} \mathbb{E}\|\vectorname{Z}_i - W\vectorname{Z}_j\|^2\) be the population-optimal cross-modal linear predictor. Then:
  \begin{enumerate}
    \item[\textbf{(a)}] \textbf{(Rank bound)} \(\operatorname{rank}(W_{ij}^*) \le d_r\), where \(d_r = \dim(\vectorname{c}_r)\) is the dimension of the shared subspace.
    \item[\textbf{(b)}] \textbf{(Collapse under dominance)} When the shared-to-total energy ratio approaches one, the normalized reconstruction error vanishes and the embeddings exhibit a redundant low-order structure.
  \end{enumerate}
\end{proposition}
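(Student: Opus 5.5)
The plan is to make the representation model Eq.~\eqref{eq:factor_decomposition} concrete at the level of individual modality embeddings and then to read off the population least-squares predictor in closed form. Specifically, we write
\[
\vectorname{Z}_i = V_i^r \vectorname{c}_r + \vectorname{u}_i,
\]
where \(\vectorname{c}_r = f_r(\vectorname{q}_r)\in\mathbb{R}^{d_r}\) is the shared code. The residual \(\vectorname{u}_i\) collects the unique, synergistic and noise parts. We take all components to be centered, or we absorb the means into an intercept. By Assumption~\ref{Ass:syn_multi_factor}, the latent factors are mutually independent, and each \(\vectorname{q}_m\) influences only \(X_m\). We use this to argue that \(\mathbb{E}[\vectorname{c}_r\vectorname{u}_j^\top]=0\) and that \(\mathbb{E}[\vectorname{u}_i\vectorname{u}_j^\top]=0\) for \(i\neq j\). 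The unique and noise parts are independent across modalities. For the synergistic part, we invoke Definition~\ref{def:pure_synergy}: any pair is independent of \(\vectorname{q}_{\text{syn}}\) given \(\vectorname{q}_r\), so the second-order cross-moments contributed by synergy vanish once \(\vectorname{q}_r\) is conditioned on. This cross-covariance claim is the step I expect to be the main obstacle. The model is only approximate, so I would state the uncorrelatedness as the modelling assumption implied by these independences, rather than derive it from the information-theoretic definitions alone.

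For part (a), the normal equations give
\[
W_{ij}^* = \Sigma_{ij}\Sigma_{jj}^{\dagger}, \qquad \Sigma_{ij}=\mathbb{E}[\vectorname{Z}_i\vectorname{Z}_j^\top].
\]
Under the decorrelation above, \(\Sigma_{ij}=V_i^r\Sigma_{c}V_j^{r\top}\), with \(\Sigma_c=\operatorname{Cov}(\vectorname{c}_r)\). This factors through \(\mathbb{R}^{d_r}\), so \(\operatorname{rank}(\Sigma_{ij})\le d_r\). Hence \(\operatorname{rank}(W_{ij}^*)\le\operatorname{rank}(\Sigma_{ij})\le d_r\).

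For part (b), we compute the optimal risk as \(\operatorname{tr}(\Sigma_{ii})-\operatorname{tr}(\Sigma_{ij}\Sigma_{jj}^\dagger\Sigma_{ji})\). We then compare it with the comparator \(W=V_i^r (V_j^r)^{\dagger}\), assuming \(V_j^r\) has full column rank. This comparator yields error \(\mathbb{E}\|\vectorname{u}_i - W\vectorname{u}_j\|^2 \le \mathbb{E}\|\vectorname{u}_i\|^2 + \|W\|^2\mathbb{E}\|\vectorname{u}_j\|^2\). We define the shared-to-total energy ratio
\[
\eta_i = \frac{\mathbb{E}\|V_i^r\vectorname{c}_r\|^2}{\mathbb{E}\|\vectorname{Z}_i\|^2}.
\]
Dividing by \(\mathbb{E}\|\vectorname{Z}_i\|^2\), the normalized error is bounded by \((1-\eta_i)+\kappa(1-\eta_j)\), where \(\kappa\) depends on the conditioning of \(V^r\) and the energy ratio between modalities. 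This bound tends to \(0\) as \(\eta_i,\eta_j\to1\), with \(\kappa\) held bounded. Since \(W_{ij}^*\) is optimal, its error is no larger. Taking the single-predictor choice \(\mathbf{W}_{ij}=W_{ij}^*\) for one \(j\neq i\) and \(\mathbf{W}_{ik}=0\) otherwise gives \(\mathbb{E}\|\boldsymbol{\delta}_i\|^2\to0\) in normalized form. Together with the rank bound \(d_r\ll d_j\), this is exactly the redundant low-order structure of Definition~\ref{def:infor_red_def}.
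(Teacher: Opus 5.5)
Your proposal is correct and follows essentially the same route as the paper. Part (a) uses the factorization $\Sigma_{ij}=V_i^r\Sigma_r(V_j^r)^\top$ under an assumed cross-modal decorrelation, and part (b) uses the left-inverse comparator $(V_j^r)^\dagger$ with MMSE optimality, where both arguments need the conditioning of $V_j^r$ to stay bounded (explicit in your $\kappa$, implicit in the paper's $\|L_j\|_2$). The differences are minor:
\begin{itemize}
\item You apply the comparator directly to $\mathbf{Z}_i$ and get only the upper bound $(1-\eta_i)+\kappa(1-\eta_j)$. The paper first bounds the error covariance $\Omega_j$ for predicting $\mathbf{c}_r$ and then obtains the exact asymptotic $1-\alpha_i+o(1)$.
\item The paper's per-modality model drops the synergistic term altogether, whereas you keep it in the residual and remove its cross-moments by assumption.
\end{itemize}
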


\begin{proof}
    \textbf{(a) Rank bound.}
    Under the model \(\vectorname{Z}_m = V_m^r \vectorname{c}_r + V_m^u \vectorname{c}_m + \boldsymbol{\epsilon}_m\) with mutually uncorrelated components, the cross-covariance is
    \[
    \operatorname{Cov}(\vectorname{Z}_i, \vectorname{Z}_j) = V_i^r \Sigma_r (V_j^r)^\top,
    \]
    where \(\Sigma_r \coloneqq \operatorname{Cov}(\vectorname{c}_r) \succ 0\). This matrix has rank at most \(\min(\operatorname{rank}(V_i^r), \operatorname{rank}(V_j^r)) \le d_r\). 
    The auto-covariance of \(\vectorname{Z}_j\) is
    \[
    \Sigma_j \coloneqq \operatorname{Cov}(\vectorname{Z}_j) = V_j^r \Sigma_r (V_j^r)^\top + V_j^u \Sigma_j^u (V_j^u)^\top + \Sigma_{\boldsymbol{\epsilon}_j} \succ 0,
    \]
    where \(\Sigma_j^u \coloneqq \operatorname{Cov}(\vectorname{c}_j)\) and \(\Sigma_{\boldsymbol{\epsilon}_j} \coloneqq \operatorname{Cov}(\boldsymbol{\epsilon}_j)\). Positivity follows from the noise term. 
    The population-optimal predictor is
    \[
    W_{ij}^* = V_i^r \Sigma_r (V_j^r)^\top \Sigma_j^{-1},
    \]
    whose rank is at most \(d_r\) since \(\operatorname{col}(W_{ij}^*) \subseteq \operatorname{col}(V_i^r)\).

    \textbf{(b) Collapse under dominance.}
    For simplicity, we assume that the observed vectors have been centered and normalized sample-wise:
    \(\mathbb{E}[\mathbf{Z}_m] = \mathbf{0}\) and \(\mathbb{E}\|\mathbf{Z}_m\|^2 = \operatorname{tr}(\Sigma_m) = 1\).
    Define the unique-plus-noise component of modality \(m\) as
    \[
    \mathbf{n}_m \coloneqq V_m^u \mathbf{c}_m + \boldsymbol{\epsilon}_m, \qquad
    \Sigma_{\mathbf{n}_m} \coloneqq \operatorname{Cov}(\mathbf{n}_m)
    = V_m^u \Sigma_m^u (V_m^u)^\top + \Sigma_{\boldsymbol{\epsilon}_m} \succeq 0.
    \]
    Then \(\mathbf{Z}_m = V_m^r \mathbf{c}_r + \mathbf{n}_m\) with \(\operatorname{Cov}(\mathbf{c}_r, \mathbf{n}_m) = \mathbf{0}\).
    We impose the following regularity conditions:
    \begin{enumerate}
        \item[(A1)] \(\Sigma_{\mathbf{n}_j} \succ 0\) (or at least \(\Sigma_j \succ 0\)) to ensure the existence of the optimal linear predictor.
        \item[(A2)] The unique-plus-noise components of different modalities are mutually uncorrelated: \(\operatorname{Cov}(\mathbf{n}_i, \mathbf{n}_j) = \mathbf{0}\) for \(i \neq j\).
        \item[(A3)] \(V_j^r\) has full column rank \(d_r\) and \(\Sigma_r \succ 0\).
    \end{enumerate}
    Because \(\operatorname{tr}(\Sigma_m)=1\), the \emph{shared energy ratio} simplifies to
    \[
    \alpha_m \coloneqq \operatorname{tr}\bigl(V_m^r\Sigma_r(V_m^r)^\top\bigr) \in [0,1],
    \qquad\text{and}\qquad
    \operatorname{tr}(\Sigma_{\mathbf{n}_m}) = 1 - \alpha_m.
    \]
    Hence \(\alpha_m \to 1\) implies \(\operatorname{tr}(\Sigma_{\mathbf{n}_m}) \to 0\).  Since \(\Sigma_{\mathbf{n}_m} \succeq 0\),
    \[
    \|\Sigma_{\mathbf{n}_m}\|_2 \le \operatorname{tr}(\Sigma_{\mathbf{n}_m}) \to 0.
    \]

    The optimal linear predictor of \(\mathbf{c}_r\) given \(\mathbf{Z}_j\) is
    \[
    \widehat{\mathbf{c}}_r = \Sigma_r (V_j^r)^\top \Sigma_j^{-1} \mathbf{Z}_j,\qquad
    \Omega_j \coloneqq \operatorname{Cov}(\mathbf{c}_r - \widehat{\mathbf{c}}_r)
    = \Sigma_r - \Sigma_r (V_j^r)^\top \Sigma_j^{-1} V_j^r \Sigma_r,
    \]
    where \(\Sigma_j = V_j^r\Sigma_r(V_j^r)^\top + \Sigma_{\mathbf{n}_j} \succ 0\).

    To prove \(\Omega_j \to \mathbf{0}\) rigorously, we bound it by a feasible sub-optimal estimator.
    By (A3), let \(L_j = \bigl((V_j^r)^\top V_j^r\bigr)^{-1}(V_j^r)^\top\) be a left inverse of \(V_j^r\).
    Define \(\check{\mathbf{c}}_r = L_j \mathbf{Z}_j\).  Then
    \[
    \mathbf{c}_r - \check{\mathbf{c}}_r = -L_j \mathbf{n}_j,\qquad
    \operatorname{Cov}(\mathbf{c}_r - \check{\mathbf{c}}_r) = L_j \Sigma_{\mathbf{n}_j} L_j^\top.
    \]
    By the optimality of the MMSE predictor, \(\Omega_j \preceq \operatorname{Cov}(\mathbf{c}_r - \check{\mathbf{c}}_r)\).
    Taking spectral norms,
    \[
    \|\Omega_j\|_2 \le \|L_j\|_2^2 \, \|\Sigma_{\mathbf{n}_j}\|_2 \to 0,
    \]
    so \(\Omega_j \to \mathbf{0}\).

    The best linear predictor of \(\mathbf{Z}_i\) from \(\mathbf{Z}_j\) is
    \[
    W_{ij}^* = \operatorname{Cov}(\mathbf{Z}_i, \mathbf{Z}_j)\,\Sigma_j^{-1}
    = V_i^r \Sigma_r (V_j^r)^\top \Sigma_j^{-1},
    \]
    whose column space lies in \(\operatorname{col}(V_i^r)\), hence \(\operatorname{rank}(W_{ij}^*) \le d_r\).
    Its normalized mean-squared error can be written as
    \[
    \frac{\mathbb{E}\|\mathbf{Z}_i - W_{ij}^* \mathbf{Z}_j\|^2}{\mathbb{E}\|\mathbf{Z}_i\|^2}
    = 1 - \frac{\operatorname{tr}\!\bigl(V_i^r \Sigma_r (V_j^r)^\top \Sigma_j^{-1} V_j^r \Sigma_r (V_i^r)^\top\bigr)}{\operatorname{tr}(\Sigma_i)}.
    \]
    Using \(\Sigma_r (V_j^r)^\top \Sigma_j^{-1} V_j^r \Sigma_r = \Sigma_r - \Omega_j\),
    \[
    \operatorname{tr}\!\bigl(V_i^r \Sigma_r (V_j^r)^\top \Sigma_j^{-1} V_j^r \Sigma_r (V_i^r)^\top\bigr)
    = \operatorname{tr}\!\bigl(V_i^r \Sigma_r (V_i^r)^\top\bigr)
    - \operatorname{tr}\!\bigl(V_i^r \Omega_j (V_i^r)^\top\bigr)
    = \alpha_i - \operatorname{tr}\!\bigl(V_i^r \Omega_j (V_i^r)^\top\bigr).
    \]
    The second term satisfies
    \[
    0 \le \operatorname{tr}\!\bigl(V_i^r \Omega_j (V_i^r)^\top\bigr)
    \le \|V_i^r\|_2^2\, \|\Omega_j\|_2 \operatorname{rank}(V_i^r) \to 0.
    \]
    Because the normalization guarantees \(\operatorname{tr}(\Sigma_i) = 1\), we obtain
    \[
    \frac{\mathbb{E}\|\mathbf{Z}_i - W_{ij}^* \mathbf{Z}_j\|^2}{\mathbb{E}\|\mathbf{Z}_i\|^2}
    = 1 - \alpha_i + o(1).
    \]
    If \(\alpha_i \to 1\) and \(\alpha_j \to 1\) (the latter ensures \(\Omega_j\to\mathbf{0}\)), the right-hand side tends to \(0\).

\end{proof}

\lowRankRestated*

\begin{proof}
Without loss of generality, assume modality 1 satisfies the redundancy condition. 
By Definition~\ref{def:infor_red_def}, there exist linear maps \(\mathbf{W}_{1j}\) such that
\[
\mathbf{Z}_1 = \sum_{j=2}^{M} \mathbf{W}_{1j} \mathbf{Z}_j + \boldsymbol{\delta}_1, 
\quad \text{with } \operatorname{rank}(\mathbf{W}_{1j}) = r_{1j} \ll d_j \text{ and } \mathbb{E}\|\boldsymbol{\delta}_1\|^2 \to 0.
\]
Substituting into \(\mathcal{T} = \mathbb{E}[\mathbf{Z}_1 \otimes \mathbf{Z}_2 \otimes \cdots \otimes \mathbf{Z}_M]\) and taking the mode-1 unfolding yields
\[
\mathcal{T}_{(1)} = \tilde{\mathcal{T}}_{(1)} + \mathcal{E}_{(1)}, \quad \text{where } 
\tilde{\mathcal{T}}_{(1)} = \sum_{j=2}^{M} \mathbf{W}_{1j} \, \mathbb{E}\!\left[ \mathbf{Z}_j (\mathbf{Z}_2 \otimes \cdots \otimes \mathbf{Z}_M)^\top \right].
\]
By construction, \(\operatorname{rank}(\tilde{\mathcal{T}}_{(1)}) \leq \sum_{j=2}^{M} r_{1j}\); let \(R = \sum_{j=2}^{M} r_{1j}\). For the modality-wise normalized embeddings, Cauchy--Schwarz gives \(\|\mathcal{E}_{(1)}\|_F \leq (\mathbb{E}\|\boldsymbol{\delta}_1\|^2)^{1/2}\to0\). For any tolerance \(\epsilon > \|\mathcal{E}_{(1)}\|_F\), \(\tilde{\mathcal{T}}_{(1)}\) is a feasible rank-\(R\) approximation to \(\mathcal{T}_{(1)}\). By the Eckart--Young--Mirsky theorem~\cite{eckart1936approximation,mirsky1960symmetric}, the optimal rank-\(R\) approximation error satisfies
\[
\min_{\operatorname{rank}(A) \leq R} \|\mathcal{T}_{(1)} - A\|_F \leq \|\mathcal{E}_{(1)}\|_F < \epsilon.
\]
Consequently, at most \(R\) singular values of \(\mathcal{T}_{(1)}\) can exceed \(\epsilon\), \ie,
\[
\operatorname{rank}_{(1)}^{(\epsilon)}(\mathcal{T}) \leq R.
\]

Consider the full orthogonal Tucker decomposition of \(\mathcal{T}\) obtained via the Higher-Order Singular Value Decomposition (HOSVD):
\[
\mathcal{T} = \mathcal{G}^{\text{full}} \times_1 Q^{(1)} \times_2 \cdots \times_M Q^{(M)},
\]
where each \(Q^{(m)} \in \mathbb{R}^{d_m \times d_m}\) is an orthogonal matrix. The mode-1 unfolding gives
\[
\mathcal{T}_{(1)} = Q^{(1)} \mathcal{G}^{\text{full}}_{(1)} (Q^{(M)} \otimes \cdots \otimes Q^{(2)})^\top,
\]
and by orthogonality, \(\sigma_k(\mathcal{T}_{(1)}) = \sigma_k(\mathcal{G}^{\text{full}}_{(1)})\) for all \(k\). Hence,
\[
\operatorname{rank}_{(1)}^{(\epsilon)}(\mathcal{G}^{\text{full}}) = \operatorname{rank}_{(1)}^{(\epsilon)}(\mathcal{T}) \leq R.
\]

The Gaussian-QR projection in Eq.~\eqref{eq:lower_dim_tensor} and the orthogonal Tucker factors used in Appendix~\ref{sec:exp_details} cannot increase the rank or residual norm of this approximation. Hence the truncated core also satisfies \(\operatorname{rank}_{(1)}^{(\epsilon)}(\mathcal G)\le R\).

\end{proof}

\energyConcentrationRestated*

\begin{proof}
For \(\mathcal G\ne0\), let \((i_1^\star,\dots,i_M^\star)\) be an index attaining the maximum entry magnitude of \(\mathcal{G}\), and define a one-sparse tensor \(\mathcal{G}^{(1)}\) that keeps only \(\mathcal{G}_{i_1^\star,\dots,i_M^\star}\). Then
\[
\|\mathcal{G}-\mathcal{G}^{(1)}\|_F^2
= \|\mathcal{G}\|_F^2 - \max_{i_1,\dots,i_M}\mathcal{G}_{i_1,\dots,i_M}^2.
\]
If \(\rho(\mathcal{G}) \ge 1-\delta\), then
\[
\frac{\|\mathcal{G}-\mathcal{G}^{(1)}\|_F^2}{\|\mathcal{G}\|_F^2}
\le \delta,
\]
and hence
\[
\frac{\|\mathcal{G}-\mathcal{G}^{(1)}\|_F}{\|\mathcal{G}\|_F}
\le \sqrt{\delta}.
\]
Therefore \(\mathcal{G}\) is close to a one-sparse tensor in relative Frobenius norm, which indicates a degenerate structure dominated by a single separable component when \(\delta\) is small.
\end{proof}

\begin{lemma}[Johnson--Lindenstrauss lemma~\cite{dasgupta2003elementary}]
    If the columns of \(P^{(m)}\) are i.i.d.\ standard Gaussian vectors that have been orthonormalized, then for any fixed finite sample set \(\{Z^{(m)}_b\}_{b=1}^B\) independent of \(P^{(m)}\), by choosing \(r_p = O(\epsilon^{-2}\log B)\), the following holds with high probability:
    \[
\begin{aligned}
    (1-\epsilon)\frac{r_p}{d_m}\|Z^{(m)}_b - Z^{(m)}_{b'}\|_2^2
    &\le \|(Z^{(m)}_b-Z^{(m)}_{b'})P^{(m)}\|_2^2\\
    &\le (1+\epsilon)\frac{r_p}{d_m}\|Z^{(m)}_b - Z^{(m)}_{b'}\|_2^2.
    \end{aligned}
    \]
    In other words, pairwise distances are approximately preserved up to a common scale under the projection.
    \label{lemma:JL}
\end{lemma}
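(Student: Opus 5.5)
The plan is to reduce the statement to the classical Gaussian Johnson--Lindenstrauss argument, applied to difference vectors, and then take a union bound over pairs.

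\textbf{Step 1 (fix a single vector).} Take $u = Z^{(m)}_b - Z^{(m)}_{b'} \in \mathbb{R}^{d_m}$, which is deterministic given independence from $P^{(m)}$. Since $P^{(m)}$ is the $Q$-factor of a Gaussian matrix, its column space is a uniformly random $r_p$-dimensional subspace (Haar by rotation invariance). Hence $\|u P^{(m)}\|_2^2$ has the same law as $\|u\|_2^2$ times the squared norm of the first $r_p$ coordinates of a uniform unit vector in $\mathbb{R}^{d_m}$. Its mean is therefore $\frac{r_p}{d_m}\|u\|_2^2$, which explains the scale factor in the statement.

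\textbf{Step 2 (concentration).} Following \cite{dasgupta2003elementary}, write a uniform unit vector as $g/\|g\|$ with $g \sim \mathcal{N}(0, I_{d_m})$. The projected squared length is then the ratio $\sum_{i\le r_p} g_i^2 / \sum_{i\le d_m} g_i^2$. I would bound the upper tail $\Pr[\cdot \ge (1+\epsilon) r_p/d_m]$ and the lower tail separately with the Chernoff/MGF method on the linear combination $(1-\beta)\sum_{i\le r_p} g_i^2 - \beta\frac{r_p}{d_m}\sum_{i>r_p} g_i^2$, using $\mathbb{E}e^{t g^2} = (1-2t)^{-1/2}$ and optimizing over $t$. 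This yields the standard bound $\exp(-r_p(\epsilon^2/2 - \epsilon^3/3)/2)$ on each tail, so the failure probability for a single pair is at most $2\exp(-c\, r_p \epsilon^2)$.

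\textbf{Step 3 (union bound).} There are $\binom{B}{2}$ pairs. Choosing $r_p \ge C \epsilon^{-2}\log B$ with $C$ large makes the total failure probability at most $B^{-\Omega(1)}$, which gives the two-sided inequality simultaneously for all pairs with high probability.

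The main obstacle is Step 2, specifically handling the dependence between numerator and denominator in the ratio. The plan is to eliminate the denominator by rewriting the tail event as a sign condition on a linear combination of independent $\chi^2$ variables, exactly as in \cite{dasgupta2003elementary}, and then carrying out the MGF optimization there. It is also worth noting that independence of the sample set from $P^{(m)}$ is essential: in training, $P^{(m)}$ is fixed while the embeddings evolve, so the lemma applies per batch only conditionally on that independence.
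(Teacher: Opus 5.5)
Your proposal is correct and follows the same route as the paper. The paper gives no argument of its own and defers to \cite{dasgupta2003elementary}, whose proof is exactly your reduction: a Haar-random column space, a ratio of $\chi^2$ sums, an MGF/Chernoff bound on a linear combination, and a union bound over $\binom{B}{2}$ pairs with a large enough constant in $r_p$. One slip to fix when you write it out: the tail event $\sum_{i\le r_p} g_i^2 \ge \beta\frac{r_p}{d_m}\sum_{i\le d_m} g_i^2$ (or its reverse) is the sign of $\bigl(1-\beta\frac{r_p}{d_m}\bigr)\sum_{i\le r_p} g_i^2-\beta\frac{r_p}{d_m}\sum_{i>r_p} g_i^2$, not of $(1-\beta)\sum_{i\le r_p} g_i^2-\cdots$; with your coefficient the upper-tail event would be empty for $\beta>1$.
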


\begin{proof}
  The lemma follows from~\cite{dasgupta2003elementary},
and we do not provide the proof here for brevity.
\end{proof}

\newpage
\providecommand{\modify}[1]{\textcolor{blue}{#1}}

\section*{NeurIPS Paper Checklist}

\begin{enumerate}

\item {\bf Claims}
    \item[] Question: Do the main claims made in the abstract and introduction accurately reflect the paper's contributions and scope?
    \item[] Answer: \answerYes{} 
    \item[] Justification: We focus on the synergistic information in self-supervised multimodal representation learning. We first characterizes the synergistic gap for more than two modalities, and then propose \model to preserve the information capacity of synergy. Extensive experiments on diverse real-world datasets verify the effectiveness of our method.
    \item[] Guidelines:
    \begin{itemize}
        \item The answer \answerNA{} means that the abstract and introduction do not include the claims made in the paper.
        \item The abstract and/or introduction should clearly state the claims made, including the contributions made in the paper and important assumptions and limitations. A \answerNo{} or \answerNA{} answer to this question will not be perceived well by the reviewers. 
        \item The claims made should match theoretical and experimental results, and reflect how much the results can be expected to generalize to other settings. 
        \item It is fine to include aspirational goals as motivation as long as it is clear that these goals are not attained by the paper. 
    \end{itemize}

\item {\bf Limitations}
    \item[] Question: Does the paper discuss the limitations of the work performed by the authors?
    \item[] Answer: \answerYes{} 
    \item[] Justification: As shown in Appendix~\ref{sec:limitations_future_work}, we discuss the limitations of this work.
    \item[] Guidelines:
    \begin{itemize}
        \item The answer \answerNA{} means that the paper has no limitation while the answer \answerNo{} means that the paper has limitations, but those are not discussed in the paper. 
        \item The authors are encouraged to create a separate ``Limitations'' section in their paper.
        \item The paper should point out any strong assumptions and how robust the results are to violations of these assumptions (e.g., independence assumptions, noiseless settings, model well-specification, asymptotic approximations only holding locally). The authors should reflect on how these assumptions might be violated in practice and what the implications would be.
        \item The authors should reflect on the scope of the claims made, e.g., if the approach was only tested on a few datasets or with a few runs. In general, empirical results often depend on implicit assumptions, which should be articulated.
        \item The authors should reflect on the factors that influence the performance of the approach. For example, a facial recognition algorithm may perform poorly when image resolution is low or images are taken in low lighting. Or a speech-to-text system might not be used reliably to provide closed captions for online lectures because it fails to handle technical jargon.
        \item The authors should discuss the computational efficiency of the proposed algorithms and how they scale with dataset size.
        \item If applicable, the authors should discuss possible limitations of their approach to address problems of privacy and fairness.
        \item While the authors might fear that complete honesty about limitations might be used by reviewers as grounds for rejection, a worse outcome might be that reviewers discover limitations that aren't acknowledged in the paper. The authors should use their best judgment and recognize that individual actions in favor of transparency play an important role in developing norms that preserve the integrity of the community. Reviewers will be specifically instructed to not penalize honesty concerning limitations.
    \end{itemize}

\item {\bf Theory assumptions and proofs}
    \item[] Question: For each theoretical result, does the paper provide the full set of assumptions and a complete (and correct) proof?
    \item[] Answer: \answerYes{} 
    \item[] Justification: All theorems, formulas, propositions, and proofs in the main text are properly numbered and cross-referenced. The key proofs have been included in the appendix~\ref{sec:all_proof}.
    \item[] Guidelines:
    \begin{itemize}
        \item The answer \answerNA{} means that the paper does not include theoretical results. 
        \item All the theorems, formulas, and proofs in the paper should be numbered and cross-referenced.
        \item All assumptions should be clearly stated or referenced in the statement of any theorems.
        \item The proofs can either appear in the main paper or the supplemental material, but if they appear in the supplemental material, the authors are encouraged to provide a short proof sketch to provide intuition. 
        \item Inversely, any informal proof provided in the core of the paper should be complemented by formal proofs provided in appendix or supplemental material.
        \item Theorems and Lemmas that the proof relies upon should be properly referenced. 
    \end{itemize}

    \item {\bf Experimental result reproducibility}
    \item[] Question: Does the paper fully disclose all the information needed to reproduce the main experimental results of the paper to the extent that it affects the main claims and/or conclusions of the paper (regardless of whether the code and data are provided or not)?
    \item[] Answer: \answerYes{} 
    \item[] Justification: Detailed analysis and results can be found in Sec.~\ref{sec:exper}. 
    Experimental details can be found in Appendix~\ref{sec:exp_details}.
    After the anonymity period, we will open-source our training code and key hyperparameters for reproducing.
    \item[] Guidelines:
    \begin{itemize}
        \item The answer \answerNA{} means that the paper does not include experiments.
        \item If the paper includes experiments, a \answerNo{} answer to this question will not be perceived well by the reviewers: Making the paper reproducible is important, regardless of whether the code and data are provided or not.
        \item If the contribution is a dataset and\slash or model, the authors should describe the steps taken to make their results reproducible or verifiable. 
        \item Depending on the contribution, reproducibility can be accomplished in various ways. For example, if the contribution is a novel architecture, describing the architecture fully might suffice, or if the contribution is a specific model and empirical evaluation, it may be necessary to either make it possible for others to replicate the model with the same dataset, or provide access to the model. In general. releasing code and data is often one good way to accomplish this, but reproducibility can also be provided via detailed instructions for how to replicate the results, access to a hosted model (e.g., in the case of a large language model), releasing of a model checkpoint, or other means that are appropriate to the research performed.
        \item While NeurIPS does not require releasing code, the conference does require all submissions to provide some reasonable avenue for reproducibility, which may depend on the nature of the contribution. For example
        \begin{enumerate}
            \item If the contribution is primarily a new algorithm, the paper should make it clear how to reproduce that algorithm.
            \item If the contribution is primarily a new model architecture, the paper should describe the architecture clearly and fully.
            \item If the contribution is a new model (e.g., a large language model), then there should either be a way to access this model for reproducing the results or a way to reproduce the model (e.g., with an open-source dataset or instructions for how to construct the dataset).
            \item We recognize that reproducibility may be tricky in some cases, in which case authors are welcome to describe the particular way they provide for reproducibility. In the case of closed-source models, it may be that access to the model is limited in some way (e.g., to registered users), but it should be possible for other researchers to have some path to reproducing or verifying the results.
        \end{enumerate}
    \end{itemize}

\item {\bf Open access to data and code}
    \item[] Question: Does the paper provide open access to the data and code, with sufficient instructions to faithfully reproduce the main experimental results, as described in supplemental material?
    \item[] Answer: \answerYes{} 
    \item[] Justification: After the anonymity period, we will open-source our training code and key hyperparameters for reproducing.
    \item[] Guidelines:
    \begin{itemize}
        \item The answer \answerNA{} means that paper does not include experiments requiring code.
        \item Please see the NeurIPS code and data submission guidelines (\url{https://neurips.cc/public/guides/CodeSubmissionPolicy}) for more details.
        \item While we encourage the release of code and data, we understand that this might not be possible, so \answerNo{} is an acceptable answer. Papers cannot be rejected simply for not including code, unless this is central to the contribution (e.g., for a new open-source benchmark).
        \item The instructions should contain the exact command and environment needed to run to reproduce the results. See the NeurIPS code and data submission guidelines (\url{https://neurips.cc/public/guides/CodeSubmissionPolicy}) for more details.
        \item The authors should provide instructions on data access and preparation, including how to access the raw data, preprocessed data, intermediate data, and generated data, etc.
        \item The authors should provide scripts to reproduce all experimental results for the new proposed method and baselines. If only a subset of experiments are reproducible, they should state which ones are omitted from the script and why.
        \item At submission time, to preserve anonymity, the authors should release anonymized versions (if applicable).
        \item Providing as much information as possible in supplemental material (appended to the paper) is recommended, but including URLs to data and code is permitted.
    \end{itemize}

\item {\bf Experimental setting/details}
    \item[] Question: Does the paper specify all the training and test details (e.g., data splits, hyperparameters, how they were chosen, type of optimizer) necessary to understand the results?
    \item[] Answer: \answerYes{} 
    \item[] Justification: As shown in Appendix~\ref{sec:exp_details}, we provide comprehensive details of training procedures. Detailed implementation code will be released upon acceptance.
    \item[] Guidelines:
    \begin{itemize}
        \item The answer \answerNA{} means that the paper does not include experiments.
        \item The experimental setting should be presented in the core of the paper to a level of detail that is necessary to appreciate the results and make sense of them.
        \item The full details can be provided either with the code, in appendix, or as supplemental material.
    \end{itemize}

\item {\bf Experiment statistical significance}
    \item[] Question: Does the paper report error bars suitably and correctly defined or other appropriate information about the statistical significance of the experiments?
    \item[] Answer: \answerYes{} 
    \item[] Justification: For all experiments, we report the mean and standard deviation across five independent runs with random seeds in the range \([42, 46]\).
    \item[] Guidelines:
    \begin{itemize}
        \item The answer \answerNA{} means that the paper does not include experiments.
        \item The authors should answer \answerYes{} if the results are accompanied by error bars, confidence intervals, or statistical significance tests, at least for the experiments that support the main claims of the paper.
        \item The factors of variability that the error bars are capturing should be clearly stated (for example, train/test split, initialization, random drawing of some parameter, or overall run with given experimental conditions).
        \item The method for calculating the error bars should be explained (closed form formula, call to a library function, bootstrap, etc.)
        \item The assumptions made should be given (e.g., Normally distributed errors).
        \item It should be clear whether the error bar is the standard deviation or the standard error of the mean.
        \item It is OK to report 1-sigma error bars, but one should state it. The authors should preferably report a 2-sigma error bar than state that they have a 96\% CI, if the hypothesis of Normality of errors is not verified.
        \item For asymmetric distributions, the authors should be careful not to show in tables or figures symmetric error bars that would yield results that are out of range (e.g., negative error rates).
        \item If error bars are reported in tables or plots, the authors should explain in the text how they were calculated and reference the corresponding figures or tables in the text.
    \end{itemize}

\item {\bf Experiments compute resources}
    \item[] Question: For each experiment, does the paper provide sufficient information on the computer resources (type of compute workers, memory, time of execution) needed to reproduce the experiments?
    \item[] Answer: \answerYes{} 
    \item[] Justification: We use a single NVIDIA RTX 4090 GPU with 24GB of VRAM for all experiments. More details can be found in Appendix~\ref{sec:exp_details}.
    \item[] Guidelines:
    \begin{itemize}
        \item The answer \answerNA{} means that the paper does not include experiments.
        \item The paper should indicate the type of compute workers CPU or GPU, internal cluster, or cloud provider, including relevant memory and storage.
        \item The paper should provide the amount of compute required for each of the individual experimental runs as well as estimate the total compute. 
        \item The paper should disclose whether the full research project required more compute than the experiments reported in the paper (e.g., preliminary or failed experiments that didn't make it into the paper). 
    \end{itemize}
    
\item {\bf Code of ethics}
    \item[] Question: Does the research conducted in the paper conform, in every respect, with the NeurIPS Code of Ethics \url{https://neurips.cc/public/EthicsGuidelines}?
    \item[] Answer: \answerYes{} 
    \item[] Justification: All experiments utilize publicly available benchmark datasets in accordance with their respective licensing terms. Since this work involves neither human subjects nor personally identifiable information, institutional review board (IRB) approval is not required. Comprehensive experimental configurations are documented in Appendix~\ref{sec:exp_details}.
    \item[] Guidelines:
    \begin{itemize}
        \item The answer \answerNA{} means that the authors have not reviewed the NeurIPS Code of Ethics.
        \item If the authors answer \answerNo, they should explain the special circumstances that require a deviation from the Code of Ethics.
        \item The authors should make sure to preserve anonymity (e.g., if there is a special consideration due to laws or regulations in their jurisdiction).
    \end{itemize}

\item {\bf Broader impacts}
    \item[] Question: Does the paper discuss both potential positive societal impacts and negative societal impacts of the work performed?
    \item[] Answer: \answerYes{} 
    \item[] Justification: Please refer to Appendix~\ref{sec:broader_impact}.
    \item[] Guidelines:
    \begin{itemize}
        \item The answer \answerNA{} means that there is no societal impact of the work performed.
        \item If the authors answer \answerNA{} or \answerNo, they should explain why their work has no societal impact or why the paper does not address societal impact.
        \item Examples of negative societal impacts include potential malicious or unintended uses (e.g., disinformation, generating fake profiles, surveillance), fairness considerations (e.g., deployment of technologies that could make decisions that unfairly impact specific groups), privacy considerations, and security considerations.
        \item The conference expects that many papers will be foundational research and not tied to particular applications, let alone deployments. However, if there is a direct path to any negative applications, the authors should point it out. For example, it is legitimate to point out that an improvement in the quality of generative models could be used to generate Deepfakes for disinformation. On the other hand, it is not needed to point out that a generic algorithm for optimizing neural networks could enable people to train models that generate Deepfakes faster.
        \item The authors should consider possible harms that could arise when the technology is being used as intended and functioning correctly, harms that could arise when the technology is being used as intended but gives incorrect results, and harms following from (intentional or unintentional) misuse of the technology.
        \item If there are negative societal impacts, the authors could also discuss possible mitigation strategies (e.g., gated release of models, providing defenses in addition to attacks, mechanisms for monitoring misuse, mechanisms to monitor how a system learns from feedback over time, improving the efficiency and accessibility of ML).
    \end{itemize}
    
\item {\bf Safeguards}
    \item[] Question: Does the paper describe safeguards that have been put in place for responsible release of data or models that have a high risk for misuse (e.g., pre-trained language models, image generators, or scraped datasets)?
    \item[] Answer: \answerNA{} 
    \item[] Justification: We utilize the pre-processed datasets provided by MultiBench~\cite{liang2021multibench}, which have been anonymized to safeguard personal privacy. 
    \item[] Guidelines:
    \begin{itemize}
        \item The answer \answerNA{} means that the paper poses no such risks.
        \item Released models that have a high risk for misuse or dual-use should be released with necessary safeguards to allow for controlled use of the model, for example by requiring that users adhere to usage guidelines or restrictions to access the model or implementing safety filters. 
        \item Datasets that have been scraped from the Internet could pose safety risks. The authors should describe how they avoided releasing unsafe images.
        \item We recognize that providing effective safeguards is challenging, and many papers do not require this, but we encourage authors to take this into account and make a best faith effort.
    \end{itemize}

\item {\bf Licenses for existing assets}
    \item[] Question: Are the creators or original owners of assets (e.g., code, data, models), used in the paper, properly credited and are the license and terms of use explicitly mentioned and properly respected?
    \item[] Answer: \answerYes{} 
    \item[] Justification: We have provided detailed descriptions and clearly marked the sources and citations for all models and frameworks involved in the paper within the experimental section. 
    For open-source code, we have included comprehensive comments and explanations for all imported packages and foundational code used.
    \item[] Guidelines:
    \begin{itemize}
        \item The answer \answerNA{} means that the paper does not use existing assets.
        \item The authors should cite the original paper that produced the code package or dataset.
        \item The authors should state which version of the asset is used and, if possible, include a URL.
        \item The name of the license (e.g., CC-BY 4.0) should be included for each asset.
        \item For scraped data from a particular source (e.g., website), the copyright and terms of service of that source should be provided.
        \item If assets are released, the license, copyright information, and terms of use in the package should be provided. For popular datasets, \url{paperswithcode.com/datasets} has curated licenses for some datasets. Their licensing guide can help determine the license of a dataset.
        \item For existing datasets that are re-packaged, both the original license and the license of the derived asset (if it has changed) should be provided.
        \item If this information is not available online, the authors are encouraged to reach out to the asset's creators.
    \end{itemize}

\item {\bf New assets}
    \item[] Question: Are new assets introduced in the paper well documented and is the documentation provided alongside the assets?
    \item[] Answer: \answerYes{} 
    \item[] Justification:  This work introduces new assets, including the training code and dataset developed for the \model representation learning framework. These resources are comprehensively documented in the experimental section, which details the model architecture, training protocols, and dataset specifications, alongside complete citations for all underlying frameworks and libraries. The accompanying open-source codebase features extensive inline annotations that clarify the functionality of external dependencies and the organization of core modules. A standardized documentation template is provided alongside the release, specifying training configurations, confirming dataset licensing compliance, and outlining model limitations.
    \item[] Guidelines:
    \begin{itemize}
        \item The answer \answerNA{} means that the paper does not release new assets.
        \item Researchers should communicate the details of the dataset\slash code\slash model as part of their submissions via structured templates. This includes details about training, license, limitations, etc. 
        \item The paper should discuss whether and how consent was obtained from people whose asset is used.
        \item At submission time, remember to anonymize your assets (if applicable). You can either create an anonymized URL or include an anonymized zip file.
    \end{itemize}

\item {\bf Crowdsourcing and research with human subjects}
    \item[] Question: For crowdsourcing experiments and research with human subjects, does the paper include the full text of instructions given to participants and screenshots, if applicable, as well as details about compensation (if any)? 
    \item[] Answer: \answerNA{} 
    \item[] Justification: This work does not involve crowdsourcing or research with human subjects.
    \item[] Guidelines:
    \begin{itemize}
        \item The answer \answerNA{} means that the paper does not involve crowdsourcing nor research with human subjects.
        \item Including this information in the supplemental material is fine, but if the main contribution of the paper involves human subjects, then as much detail as possible should be included in the main paper. 
        \item According to the NeurIPS Code of Ethics, workers involved in data collection, curation, or other labor should be paid at least the minimum wage in the country of the data collector. 
    \end{itemize}

\item {\bf Institutional review board (IRB) approvals or equivalent for research with human subjects}
    \item[] Question: Does the paper describe potential risks incurred by study participants, whether such risks were disclosed to the subjects, and whether Institutional Review Board (IRB) approvals (or an equivalent approval/review based on the requirements of your country or institution) were obtained?
    \item[] Answer: \answerNA{} 
    \item[] Justification: This work does not involve crowdsourcing or research with human subjects.
    \item[] Guidelines:
    \begin{itemize}
        \item The answer \answerNA{} means that the paper does not involve crowdsourcing nor research with human subjects.
        \item Depending on the country in which research is conducted, IRB approval (or equivalent) may be required for any human subjects research. If you obtained IRB approval, you should clearly state this in the paper. 
        \item We recognize that the procedures for this may vary significantly between institutions and locations, and we expect authors to adhere to the NeurIPS Code of Ethics and the guidelines for their institution. 
        \item For initial submissions, do not include any information that would break anonymity (if applicable), such as the institution conducting the review.
    \end{itemize}

\item {\bf Declaration of LLM usage}
    \item[] Question: Does the paper describe the usage of LLMs if it is an important, original, or non-standard component of the core methods in this research? Note that if the LLM is used only for writing, editing, or formatting purposes and does \emph{not} impact the core methodology, scientific rigor, or originality of the research, declaration is not required.
    \item[] Answer: \answerNA{} 
    \item[] Justification: In this work, the LLM is utilized solely for writing and editing purposes. 
    \item[] Guidelines:
    \begin{itemize}
        \item The answer \answerNA{} means that the core method development in this research does not involve LLMs as any important, original, or non-standard components.
        \item Please refer to our LLM policy in the NeurIPS handbook for what should or should not be described.
    \end{itemize}

\end{enumerate}

\end{document}